\newtheorem{theorem}{Theorem}
\newtheorem{lemma}[theorem]{Lemma}
\journal{Journal of Neurocomputing}
\begin{document}

\begin{frontmatter}



\title{Demonstration-Guided Continual Reinforcement Learning in Dynamic Environments}


\author[inst1]{Xue Yang\corref{cor1}}
\ead{x.yang6@universityofgalway.ie}
\author[inst1]{Michael Schukat}
\ead{c}
\author[inst1]{Junlin Lu}
\ead{j.lu5@universityofgalway.ie}
\author[inst1]{Patrick Mannion}
\ead{patrick.mannion@universityofgalway.ie}
\author[inst1]{Karl Mason}
\ead{karl.mason@universityofgalway.ie}
\author[inst1]{Enda Howley}
\ead{enda.howley@universityofgalway.ie}

\cortext[cor1]{Corresponding author}

\affiliation[inst1]{organization={School of Computer Science, University of Galway},
            country={Ireland}}

\begin{abstract}
Reinforcement learning (RL) excels in various applications but struggles in dynamic environments where the underlying Markov decision process evolves. Continual reinforcement learning (CRL) enables RL agents to continually learn and adapt to new tasks, but balancing stability (preserving prior knowledge) and plasticity (acquiring new knowledge) remains challenging. Existing methods primarily address the stability-plasticity dilemma through mechanisms where past knowledge influences optimization but rarely affects the agent’s behavior directly, which may hinder effective knowledge reuse and efficient learning. In contrast, we propose demonstration-guided continual reinforcement learning (DGCRL), which stores prior knowledge in an external, self-evolving demonstration repository that directly guides RL exploration and adaptation. For each task, the agent dynamically selects the most relevant demonstration and follows a curriculum-based strategy to accelerate learning, gradually shifting from demonstration-guided exploration to fully self-exploration. Extensive experiments on 2D navigation and MuJoCo locomotion tasks demonstrate its superior average performance, enhanced knowledge transfer, mitigation of forgetting, and training efficiency. The additional sensitivity analysis and ablation study further validate its effectiveness.
\end{abstract}




\begin{keyword}
Continual Reinforcement Learning, Stability-Plasticity Dilemma, Catastrophic Forgetting, Learning from Demonstrations, Curriculum Learning



\end{keyword}

\end{frontmatter}



\section{Introduction}

Reinforcement learning (RL) has shown significant potential across diverse domains, including video games \cite{mnih2013playing}, robotics \cite{parisi2019continual}, drone navigation \cite{kaufmann2023champion}, and energy management \cite{lu2022multi,lu2023go}. The dominant paradigm in RL typically focuses on optimizing performance for a single task where the underlying Markov decision processes (MDP) remain static over time. However, this assumption is often unrealistic as any aspect of the MDP may change in practice. In such non-stationary environments, RL agents frequently suffer from catastrophic forgetting, where newly acquired knowledge interferes with previously learned behaviors.

To make RL more applicable in realistic settings, it is necessary to equip RL agents with the ability to continually learn and adapt to evolving environments, much like human learning. This is referred to as continual reinforcement learning (CRL) \cite{hadsell2020embracing,khetarpal2022towards,wolczyk2022disentangling}. A central challenge is achieving a balance between retaining old knowledge (stability) and acquiring new knowledge (plasticity), known as the stability-plasticity dilemma.

Existing methods, including replay-based \cite{rebuffi2017icarl,ramapuram2020lifelong,lopez2017gradient}, regularization-based \cite{li2017learning,kirkpatrick2017overcoming,zenke2017continual,nguyen2017variational}, and parameter isolation-based approaches \cite{rusu2016progressive,xu2021adaptive,fernando2017pathnet}, focus on addressing the stability-plasticity dilemma within neural networks (NNs), where the knowledge from past experiences mainly influences optimization during training but rarely affects the agent’s behavior directly. For example, replay methods store past transtions in a replay buffer (often mixed with online data) and replay then when learning new tasks. However, prior knowledge is encoded in model parameters and does not explicitly specify the actions the agent should take during interaction with the environment. As a result, its influence on the agent’s behavior is indirect. Despite NNs being powerful function approximators \cite{hornik1989multilayer}, their black-box nature complicates the separation of knowledge retention from policy learning, which may limit the effective knowledge reuse and efficient learning across tasks. We refer to this challenge as the knowledge decoupling problem. 

In addition, many methods necessitate modeling the environment by representing it as a set of factors to facilitate fine-tuning of previously learned policies. It requires parameterizing the current task or assessing task similarity to identify relevant context-specific knowledge for transfer \cite{nagabandi2018deep,wang2021lifelong, zhang2023dynamics}. However, rigid environment modeling can increase computational costs and introduce additional uncertainty, which we refer to as the environment modeling problem.

To address the aforementioned issues, we explore CRL from a different perspective. Human learners often rely on external sources such as reference materials or expert demonstrations that can be repeatedly consulted to directly shape the behavior in different situations. This motivates us to represent knowledge using reusable, external demonstrations to provide direct guidance on the agent's exploration behavior and accelerate efficient adaptation across tasks, particularly in cases where explicit environment modeling is unnecessary or infeasible. 

We propose a novel CRL framework, i.e., \textit{demonstration-guided continual reinforcement learning} (DGCRL), which stores prior knowledge in the form of demonstrations into an external, self-evolving demonstration repository. The demonstrations can be derived from trajectories, pre-trained policies, or human experts, even if they are sub-optimal. In this study, we use trajectory-level action sequences as demonstrations. They serve as the guide policy, providing a curriculum of starting states for the exploration policy, directly guiding the RL exploration behavior and promoting faster learning and adaptation across tasks. Compared with other continual learning methods, where prior knowledge mainly influences the agent \emph{indirectly} by providing additional gradient signals during parameter updates, these approaches do not alter the agent’s behavior during interaction with the environment. The rollout policy remains fully determined by the current exploration policy, and thus the state visitation distribution is unchanged. In contrast, demonstrations in DGCRL are executed during part of each rollout, allowing the guide policy to \emph{directly} determine the agent’s actions and state visitation distribution. This direct behavioral guidance enables DGCRL to place the agent in informative regions of the state space and significantly accelerate adaptation under environment changes.

For each new task, DGCRL selects the most relevant demonstration and gradually transitions from demonstration-guided exploration to fully self-exploration through a curriculum schedule, where the influence of demonstrations is gradually reduced. To achieve this, we extend the jump-start reinforcement learning (JSRL) \cite{uchendu2023jump} method from single-task to continual learning settings by dynamically reselecting the most relevant demonstration for each new task, and incorporating a self-evolving demonstration mechanism \cite{lu2024demonstration}: if a newly learned policy outperforms existing demonstrations, we store it as a new demonstration in the repository to ensure ongoing knowledge retention and improvement. 

DGCRL is capable of being integrated with various RL algorithms; in this study, we adopt an actor-critic algorithm, i.e., twin delayed deep deterministic policy gradient (TD3) \cite{fujimoto2018addressing}, due to its proven advantages in continuous control tasks. Extensive experiments have been conducted on 2D navigation and MuJoCo locomotion tasks, where the transition and/or reward functions change across tasks, to study the properties and validate the effectiveness of DGCRL.

The contributions of this work are summarized as follows:
\begin{enumerate}
    \item \textbf{Storing prior knowledge as external demonstrations.} We investigate the issues of existing CRL methods and propose DGCRL, a framework that stores prior knowledge as external demonstrations. DGCRL leverages demonstrations to directly guide the agent’s exploration and adaptation, enabling effective knowledge reuse and efficient learning without the need for environment modeling.
    \item \textbf{Dynamic curriculum-based exploration.} We introduce a dynamic curriculum-based strategy where the agent dynamically selects the most relevant demonstration for each new task. The demonstration can provide direct guidance on RL exploration and adaptation via a curriculum, allowing the agent to gradually transition from demonstration-guided exploration to fully self-exploration. Meanwhile, a self-evolving mechanism is incorporated to ensure the demonstration repository continuously improves during training. 
    
    \item \textbf{Empirical validation on 2D navigation and Mujoco locomotion.} Extensive experiments on robotics control tasks show that DGCRL significantly outperforms baselines in terms of average performance, forward transfer, mitigation of forgetting, and training efficiency. We also highlight the limitations of the conventional forgetting metric. The sensitivity analyses shows the impact of varying demonstration quantities on the performance. Furthermore, the ablation studies underscore the importance of resetting both the actor and critic components for optimal performance, and demonstrate the advantages of combining self-evolving demonstration guidance with curriculum learning in DGCRL.

\end{enumerate}

We outline the related work in Section \ref{sec:related work} and introduce the problem formulation in Section \ref{sec:Problem Formulation}. The details of the DGCRL method is presented in Section \ref{sec:dgcrl}. The experiment settings and results are shown in Section \ref{sec:experiment setting} and Section \ref{sec:experiment result}. We finally conclude this study in Section \ref{sec:conclusion}. In addition, the implementation details are provided in \ref{sec:hyperparameters}, and a theoretical analysis showing the upper bound on regret and sample complexity is provided in \ref{sec:theoretical analysis}. The source code of DGCRL is available on GitHub \footnote{\url{https://github.com/XueYang0130/DGCRL.git}}.

\section{Related Work}
\label{sec:related work}
\subsection{Continual Learning}
Continual Learning has been extensively studied in the supervised learning domain and has recently received increasing attention in RL due to its implications in autonomous learning agents and robots \cite{bagus2022beyond, parisi2019continual}. It refers to how artificial systems learn incrementally from continuous streams of information in non-stationary environments, without the need to retrain from scratch. A set of related but distinct tasks need to be completed sequentially, where acquiring new knowledge often leads to catastrophic forgetting of previously learned tasks \cite{hadsell2020embracing}. Limited resources such as computational budget and storage capacity should also be considered. 

A variety of approaches have been investigated in this area, often categorized into three types: replay methods, regularization-based methods, and parameter isolation methods, depending on how task-specific information is stored and used throughout the continual learning process \cite{de2021continual}. With replay methods, previous task samples including raw samples or generated pseudo-samples are stored and relayed when learning new tasks, either as model inputs for rehearsal \cite{rebuffi2017icarl,ramapuram2020lifelong} or used for constrained optimization of new tasks \cite{lopez2017gradient}. Regularization-based methods add an extra term to the loss function to regularize model parameters, using techniques such as knowledge distillation to provide an auxiliary target for the network being trained \cite{li2017learning}, or by penalizing significant updates to important parameters during new task learning \cite{kirkpatrick2017overcoming,zenke2017continual,nguyen2017variational}. Parameter isolation methods protect model parameters for past tasks through dynamic architecture such as progressive neural networks \cite{rusu2016progressive,xu2021adaptive} or static architecture where previous task-related parts are masked during new task training \cite{fernando2017pathnet}. 

Notably, transfer learning aims to improve a (or many) target task by leveraging knowledge from a (or many) related but different source task, typically without requiring retraining from scratch \citep{weiss2016survey}. In contrast, continual learning considers a sequence of tasks and aims to adapt to new tasks without forgetting previous ones. While transfer learning emphasizes performance on the target task, such as a jumpstart or reduced time in learning \citep{taylor2009transfer}, continual learning focuses on knowledge transfer, forgetting mitigation, and overall performance across all tasks.

\subsection{Continual Reinforcement Learning}
In the context of RL, continual learning involves a
sequential decision-making problem over a stream of tasks where each task can be considered a stationary MDP. The most common non-stationarity in the environment is in the transition dynamics and/or reward function. Quick adaptation and building on relevant previously learned behaviors are central to the study of continual lifelong RL \cite{khetarpal2022towards}. 

Recent studies have proposed various approaches to address this problem. Nagabandi et al. propose the meta-learning for online learning (MOLe) approach, which employs expectation maximization (EM) with a Chinese restaurant process (CRP) to maintain a mixture of neural dynamics models, which are meta-trained via model-agnostic meta-learning (MAML) and updated online for rapid adaptation in model-based RL \citep{nagabandi2018deep}. Wang et al. introduce lifelong incremental reinforcement learning (LLIRL) using similar techniques EM with CRP to maintain a mixture model incrementally for different tasks \cite{wang2021lifelong}. Zhang et al. present the dynamics-adaptive CRL (DaCoRL) method, which applies CRP for context clustering and employs an expandable multi-head neural network to optimize the context-conditioned policy, with a knowledge distillation regularization term incorporated to mitigate catastrophic forgetting \cite{zhang2023dynamics}. However, these methods expand and fine-tune their model incrementally, incurring increasing computational and memory overhead as the number of tasks grows. Wolczyk et al. propose the ClonEx-SAC method, which employs behavioral cloning for the actor within the SAC algorithm to mitigate forgetting and enhance transfer, while reusing previous policies for faster exploration \cite{wolczyk2022disentangling}. Unfortunately, in these methods, knowledge of past tasks is typically used to shape internal optimization and rarely directly affects the agent's behavior. This may be inefficient and inflexible for knowledge reuse. For a comprehensive overview of CRL formulations and methodologies, we refer readers to the recent survey by Khetarpal et al. \cite{khetarpal2022towards}.

While these methods represent the state-of-the-art in CRL, they typically suffer from the knowledge decoupling problem and environment modeling problem, as discussed earlier. In contrast, our demonstration-based approach externalizes prior knowledge into a self-evolving demonstration repository that directly guides behavior and accelerates adaptation without relying on environment modeling, enabling more sufficient and efficient reuse of knowledge.

\subsection{Demonstrations in Continual Reinforcement Learning}
Many replay-based methods have been proposed to integrate prior data into memory for CRL, enabling long-term knowledge storing and balancing plasticity through on-policy learning with stability via off-policy learning (\cite{isele2018selective,rolnick2019experience}). However, these methods often suffer from limited storage requirements and sampling bias. While using generated pseudo-data for replay (\cite{caselles2018continual}, \cite{li2021sler}) can address these issues, the continual training of the deep generative model or any other data generator introduces additional complexity. Furthermore, their replayed experiences primarily influence learning through optimization with limited direct guidance on behavior. In contrast, demonstrations can offer structured, trajectory-level guidance that agents can imitate directly, enabling more efficient exploration and faster adaptation. 

Learning from Demonstration (LfD), also known as imitation learning, has been extensively studied in the context of autonomous robotics. Robotic agents acquire new skills by learning to imitate an expert which can be viewed as a supervised learning problem \cite{ravichandar2019robot}. However, its integration into CRL remains limited.

Demonstrations can be collected from trajectories, pre-trained policies, or human experts, even if sub-optimal. The key is how to effectively utilize such demonstrations to facilitate continual learning. Uchendu et al. introduced jump-start reinforcement learning (JSRL), employing sub-optimal guide policies obtained from demonstrations to gradually roll in prior policies to provide a curriculum of “good” starting states for RL exploration in single tasks \cite{uchendu2023jump}. The effect of the guide policy gradually diminishes with the improvement of the exploration policy. It has been successfully extended in multi-objective RL \cite{lu2024demonstration}, but its application to continual learning remains underexplored. Our work adapts JSRL from single-task to multi-task by dynamically selecting the most relevant demonstration for each task. These demonstrations, composed of trajectory-level action sequences, directly influence the agent's behavior during learning.

\section{Problem Formulation}
\label{sec:Problem Formulation}


We formalize the MDP as \( M := (\mathcal{S}, \mathcal{A}, \mathcal{T}, \gamma, R) \), where
$\mathcal{S}$ and $\mathcal{A}$ represent the state and action spaces, $\mathcal{T}:S\times A\times S \xrightarrow[]{} [0,1]$ is a probabilistic transition function, $\gamma\in[0,1)$ is a discount factor,  and $R:S\times A\times S \xrightarrow[]{}\mathbb{R}$ specifies the reward function. At each time step $t$, the agent observes the current state $s_t \in \mathcal{S}$, 
selects an action $a_t \in \mathcal{A}$ according to a policy $\pi(a_t \mid s_t)$, and transitions to the next state $s_{t+1} \sim \mathcal{T}(\cdot \mid s_t, a_t)$, 
receiving a scalar reward $r_{t+1} = R(s_t, a_t, s_{t+1})$.
The objective of the agent is to learn a policy $\pi$ that maximizes the expected 
discounted cumulative reward: 
\begin{equation}
J(\pi) = \mathbb{E}_\pi \left[ \sum_{t=0}^\infty \gamma^t r_{t} \right]
\label{eq:expected_return}.
\end{equation}

A CRL problem can be considered as an agent interacting with a sequence of stationary tasks, i.e., \(
\mathcal{D}=\{\mathcal{M}_1,\mathcal{M}_2,\dots,\mathcal{M}_N\},
\) where each task \(\mathcal{M}_i\), $i\in[0,N]$ symbolizes the $i$-th task the agent encounters during learning. It shares a common state--action space with all other tasks but may have distinct reward and/or transition dynamics. Each task \(\mathcal{M}_i\) is assumed to last for a sufficient number of time steps to support effective agent interaction and learning.

In the long term, the agent aims to learn a policy that maximises the average return on all tasks as shown in Equation \ref{eq:crl_objective}.

\begin{equation}
\mathcal{J}_{\text{CRL}}(\pi) = \frac{1}{N} \sum_{i=1}^{N} \mathbb{E}_{\pi} \left[ \sum_{t=0}^{\infty} \gamma^t r_t^{(i)} \right].
\label{eq:crl_objective}
\end{equation}

Another two challenges in CRL are retaining performance on previously seen tasks to mitigate forgetting and increasing forward transfer capability using prior knowledge on new tasks. Furthermore, computational and storage constraints should be considered, indicating that it is impractical to store and replay all past experiences or retrain from scratch for each new task.

\section{Demonstration-Guided Continual Reinforcement Learning}
\label{sec:dgcrl}
In this section, we provide a detailed description of DGCRL, including storing prior knowledge as demonstrations and dynamic curriculum-based exploration, with a detailed algorithm provided. Figure \ref{fig:DGCRL} provides a structured overview of DGCRL.

\begin{figure}[ht]
\begin{center}
\includegraphics[width=0.95\columnwidth]{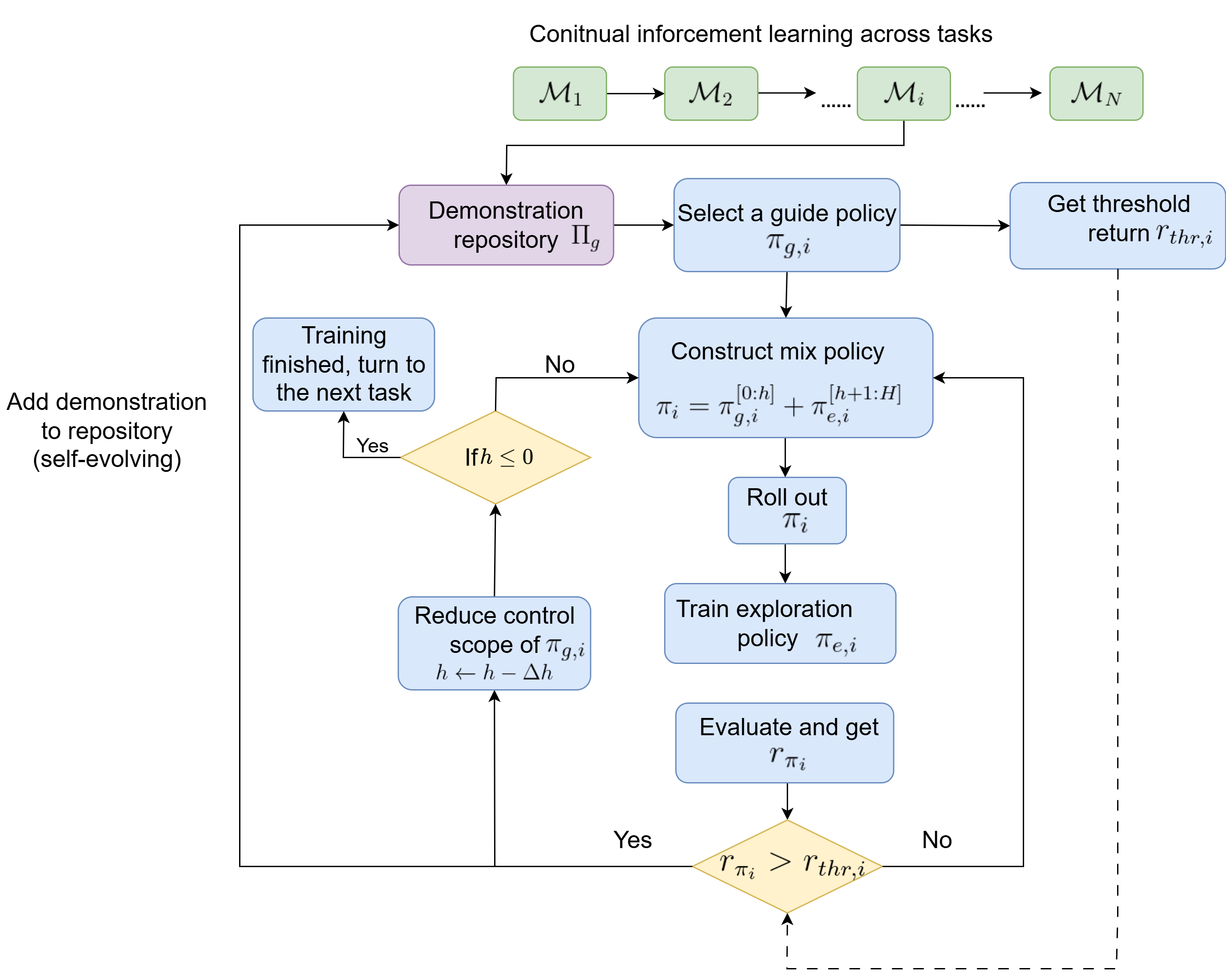}
\caption{In DGCRL, the agent dynamically selects the most relevant demonstration from the self-evolving repository for each new task, and then follows a curriculum-based strategy to guide exploration and facilitate faster learning.}
\label{fig:DGCRL}
\end{center}
\end{figure}

\textbf{Storing prior knowledge as external demonstrations} Unlike replay methods that typically store prior knowledge in the form of \( (s, a, r, s') \) transitions, which are often mixed with online transitions for jointly training, DGCRL represent prior knowledge as external demonstrations. The demonstrations consist of action sequences, i.e., \( (a_1, a_2, a_3...) \) , extracted from previously learned, potentially sub-optimal policies and are stored externally in a dedicated demonstration repository.

We begin by initializing a demonstration set $\Pi_{g}$ with demonstrations collected from policies trained in default environments, such as random or manually defined environments, which may differ from the environments during evaluation. This provides initial guidance for the first task $\mathcal{M}_{1}$ encountered by the agent.

For each task $\mathcal{M}_{i}$, the agent selects the most relevant demonstration $\pi_{g, i}$ from the repository, which performs the "best" (i.e., with the highest return) in the current task. $\pi_{g, i}$ serves as the prior or guide policy to initialize the agent's exploration with a relatively "good" position, enabling direct exploration guidance and fast learning within the curriculum. The return corresponding to the selected demonstration or prior policy is set as the threshold return  $r_{thr, i}$.

A self-evolving strategy \cite{lu2024demonstration} is employed that allows the demonstration repository to evolve automatically. During training, if the agent discovers a new demonstration that achieves a return higher than the current threshold $r_{thr, i}$, it is added to the demonstration repository. This supports continual learning with progressively enriched prior knowledge. In practice, the threshold return $r_{thr, i}$ can be scaled by a discount factor $\beta$, demoted as \[\tilde{r}_{\mathrm{thr},i}(t) \;=\; \beta_t \, r_{\mathrm{thr},i}, 
\qquad \beta_t \xrightarrow[]{} 1 \text{ as training progresses.}\]  With $\beta_{t}\in(0,1]$ when returns are positive, and $\beta_{t}>1$ (e.g., $\approx 1.3$) when returns are negative, this softens the early requirement of surpassing the given demonstration and encourages broader exploration. 

\textbf{Dynamic curriculum-based exploration } 
After selecting the most suitable demonstration for each new task $\mathcal{M}_{i}$, DGCRL adopts a dynamic curriculum-based exploration strategy, extending the JSRL framework \cite{uchendu2023jump} from single-task to continual learning settings.

The policy \(\pi_{i}\) of the agent comprises two components: the guide policy \(\pi_{g,i}\) and the exploration policy \(\pi_{e,i}\). During training, the agent initially adheres to \(\pi_{g,i}\) by imitating the actions from the demonstration for the first \(h\) timesteps. Subsequently, it engages in the exploration policy \(\pi_{e,i}\) for self-exploration during the remaining \(H-h\) steps (\(H\) denotes the task horizon, i.e., episode length). This staged approach facilitates a good starting position for RL exploration, enhancing both the initiation and simplification of the exploration process.

After the rollout of \(\pi_{i}\), the collected data is utilized to train \(\pi_{e,i}\). The agent then undertakes a single episode to evaluate the updated policy \(\pi_{i}\) and its return \(r_{\pi_{i}}\) is compared against the threshold return $r_{thr, i}$. If \(r_{\pi_i}\) exceeds $r_{thr, i}$, the corresponding demonstration is integrated into the repository, forming a self-evolving mechanism. At the end of each iteration, the guide horizon \(h\) is reduced by \(\Delta h\), progressively diminishing the influence of the guide policy as the exploration policy enhances. This iterative process continues until the end condition is met and the curriculum-based learning is repeated for each environment. 

Formally, the mixed policy $\pi_{i}$ described above induces a task-specific return that corresponds directly to the single-task objective in Equation \ref{eq:expected_return}. The exploration policy $\pi_{e, i}$ is trained to maximize this return. As DGCRL applies this procedure sequentially to all tasks $\mathcal{M}_1, \mathcal{M}_2, \dots, \mathcal{M}_N$
, the method effectively optimizes the CRL objective $J_{\mathrm{CRL}}$ defined in Equation \ref{eq:crl_objective}.

Overall, DGCRL balances stability and plasticity through three key strategies:  
(1) storing prior knowledge in an external demonstration repository,
(2) employing a dynamic curriculum-based strategy to directly guide RL exploration and adaptation using selected demonstrations, and
(3) retaining high-performing demonstrations through a self-evolving mechanism. More details are provided in Algorithm~\ref{alg: DGCRL Algorithm}.

\begin{algorithm}[htp]

    \caption{Demonstration-Guided Continual Reinforcement Learning}
    \label{alg: DGCRL Algorithm}
    \begin{algorithmic}[1]
    \STATE {\textbf{Input:} Initial demonstration set $\Pi_{g}$,     \\ Dynamic environment $\mathcal{D}=\{\mathcal{M}_{1}, \mathcal{M}_{2}, ..., \mathcal{M}_{N}\}$.}
    \FOR{$\mathcal{M}_i$ in $\mathcal{D}$}
        \STATE{Select the demonstration $\pi_{g, i}$ that from $\Pi_{g}$} that performs the best on task $\mathcal{M}_i$
        \STATE{Set the return of $\pi_{g, i}$ as the threshold $r_{thr, i}$}
        \STATE {Initialize guide horizon $h=H$ (where $H$ is the task horizon, i.e., episode length)}
        \WHILE{$h\geq0$}
            \STATE Set mix policy $\pi_i=\pi_{g, i}^{[0:h]}+\pi_{e,i}^{[h+1:H]}$
            \STATE Roll out $\pi_i$, gather the experience
            \STATE Train exploration policy $\pi_{e,i}$
            \STATE Evaluate $r_{\pi_{i}}$ from a single rollout of $\pi_i$
            \IF{$r_{\pi_i}>r_{thr,i}$}
                \STATE Add the resultant demonstration of $\pi_i$ to $\Pi_{g}$
                \STATE $h\xleftarrow[]{}h-\Delta h$, $\Delta h$ is the rollback span
            \ENDIF
        \ENDWHILE
    \ENDFOR
    \end{algorithmic}
\end{algorithm}

\section{Experiment Settings}
\label{sec:experiment setting}

We evaluate DGCRL on robotics control tasks including 2D navigation and MuJoCo locomotion and compare it against various baseline methods. The environments include 50 sequential tasks with non-stationarity in reward and/or transition functions across tasks. This follows prior CRL studies \cite{wang2021lifelong,zhang2023dynamics}, enabling fair and direct comparison. See \ref{sec:hyperparameters} for DGCRL's implementation and hyperparameter details.

\subsection{Benchmark Environments}
\label{subsec:benchmark environments}

\subsubsection{2D Navigation}
\label{subsubsec:2D Robot Navigation}

The 2D navigation environment \cite{wang2021lifelong,wang2019incremental} refers to an agent navigating from a start position to a goal position. The state is the agent’s 2D position, and the action is a 2D velocity bounded in $[-0.1, 0.1]$. The reward is the negative squared distance to the goal minus a control cost proportional to the magnitude of the action. Each episode ends upon reaching the goal (within 0.01 units) or after 100 steps. To simulate non-stationarity across tasks, three environment variants are constructed:
\begin{itemize}
    \item \textbf{Type I}: The goal position changes, modifying the reward function.
    \item \textbf{Type II}: The puddle position changes, modifying the transition function. The agent should navigate to the goal while avoiding three circular puddles with different sizes. Once hitting the puddles, the agent will bounce to the previous position.
    \item \textbf{Type III}: Both the goal and puddle positions vary, affecting both reward and transition functions.
\end{itemize}

These three 2D navigation tasks are designed to evaluate different forms of non-stationarity in CRL. Type I only modifies the reward function while keeping the transition dynamics fixed. This setting isolates reward-level non-stationarity and tests whether DGCRL can efficiently use demonstrations to accelerate exploration in new tasks. Type II changes only the puddle locations. It alters the transition dynamics, but keeps the reward fixed. This setting tests the agent’s ability to adjust its exploration strategy when the environment dynamics change. Type III changes both the goal and puddle positions, simultaneously modifying the reward and transition functions. This represents the most challenging non-stationary scenario in the 2D navigation tasks.

Figure~\ref{fig:robot navigation} illustrates the settings, where the green circle is the start, the red circle is the goal, and the blue circle denotes puddles of varying sizes such as with radius of 0.1, 0.15, and 0.2, respectively.

More details of the 2D navigation setup can be found in prior work \cite{wang2021lifelong, wang2019incremental}. Notably, these studies overlook cases where the start or goal positions are initialized within puddles, which may render the episode ineffective, as the agent can neither move nor reach the goal. To address this, we explicitly prevent the start and goal positions from being placed inside puddles.

\begin{figure}[htp]
\begin{center}
\centerline{\includegraphics[width=0.7\columnwidth]{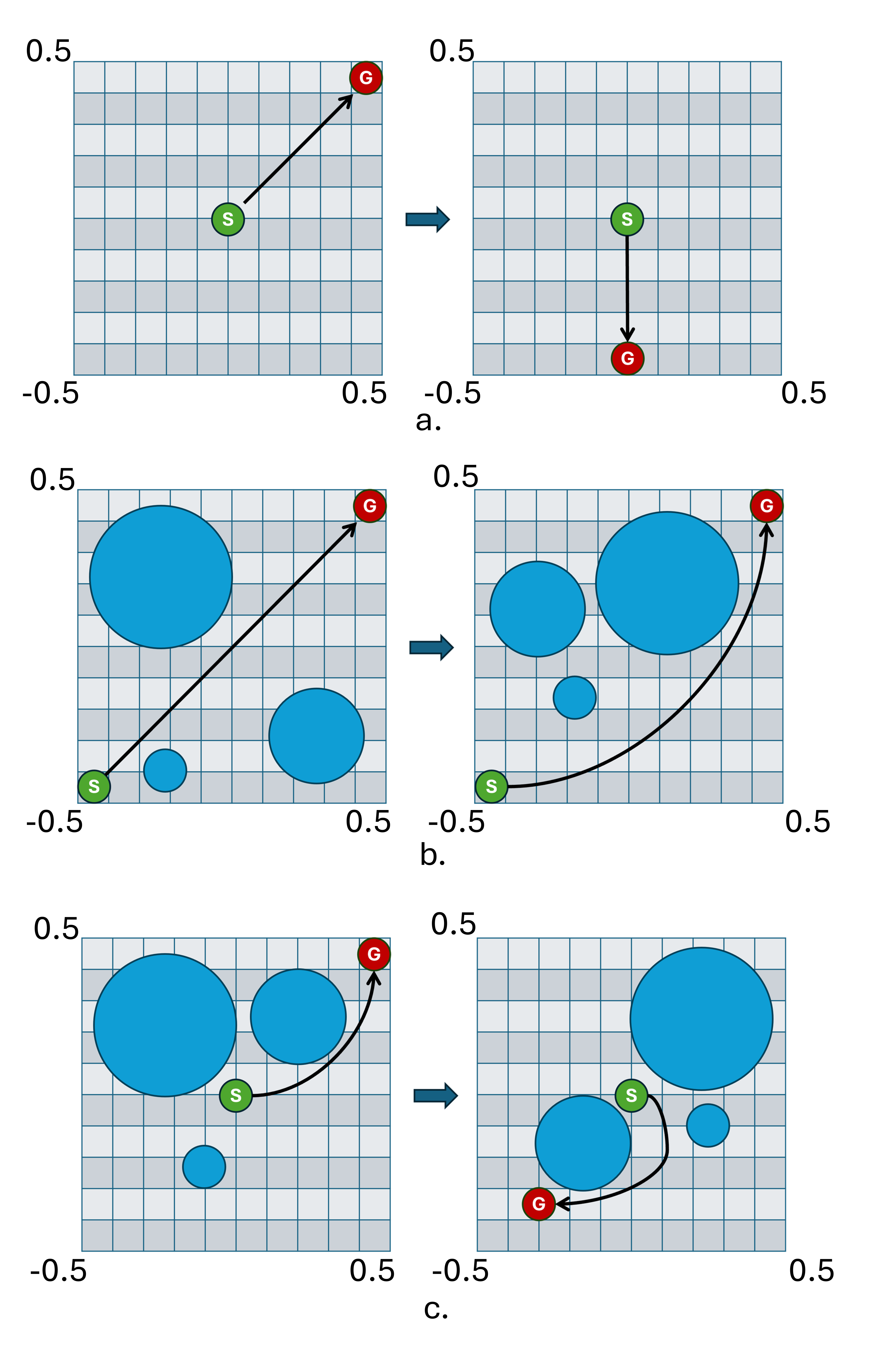}}
\caption{2D Navigation (a) Type 1 (v1): Only the goal position changes. (b) Type 2 (v2): Only the puddle positions change. (c) Type 3 (v3): Both the puddle and goal positions change.}
\label{fig:robot navigation}
\end{center}
\end{figure}

\subsubsection{MuJoCo Locomotion}
\label{subsubsec:MuJoCo locomotion}

We further evaluate DGCRL on three standard MuJoCo locomotion benchmarks: Hopper, HalfCheetah, and Ant. Compared to the 2D navigation tasks, these environments have significantly higher dimensions of the state and action spaces and more complex dynamics. Each task involves controlling an agent to run at a specified velocity along the positive $x$-direction. The reward function consists of an ``alive bonus'' and a penalty component negatively correlated with the absolute deviation of the agent’s current velocity $v_x$ from the target velocity $v_g$. To introduce task variations, the target velocity is randomly sampled from predefined ranges: [0.0, 1.0] for Hopper, [0.0, 2.0] for HalfCheetah, and [0.0, 0.5] for Ant, which directly affect the reward function. Each episode starts from a fixed initial state and terminates either when the agent falls or the maximum episode length of 100 steps is reached, consistent with setups in prior work \cite{wang2021lifelong, zhang2023dynamics, wang2019incremental}. Visualizations of the environments are provided in Figure~\ref{fig:robot locomotion}.

\begin{figure}[htp]
\begin{center}
\centerline{\includegraphics[width=0.7\columnwidth]{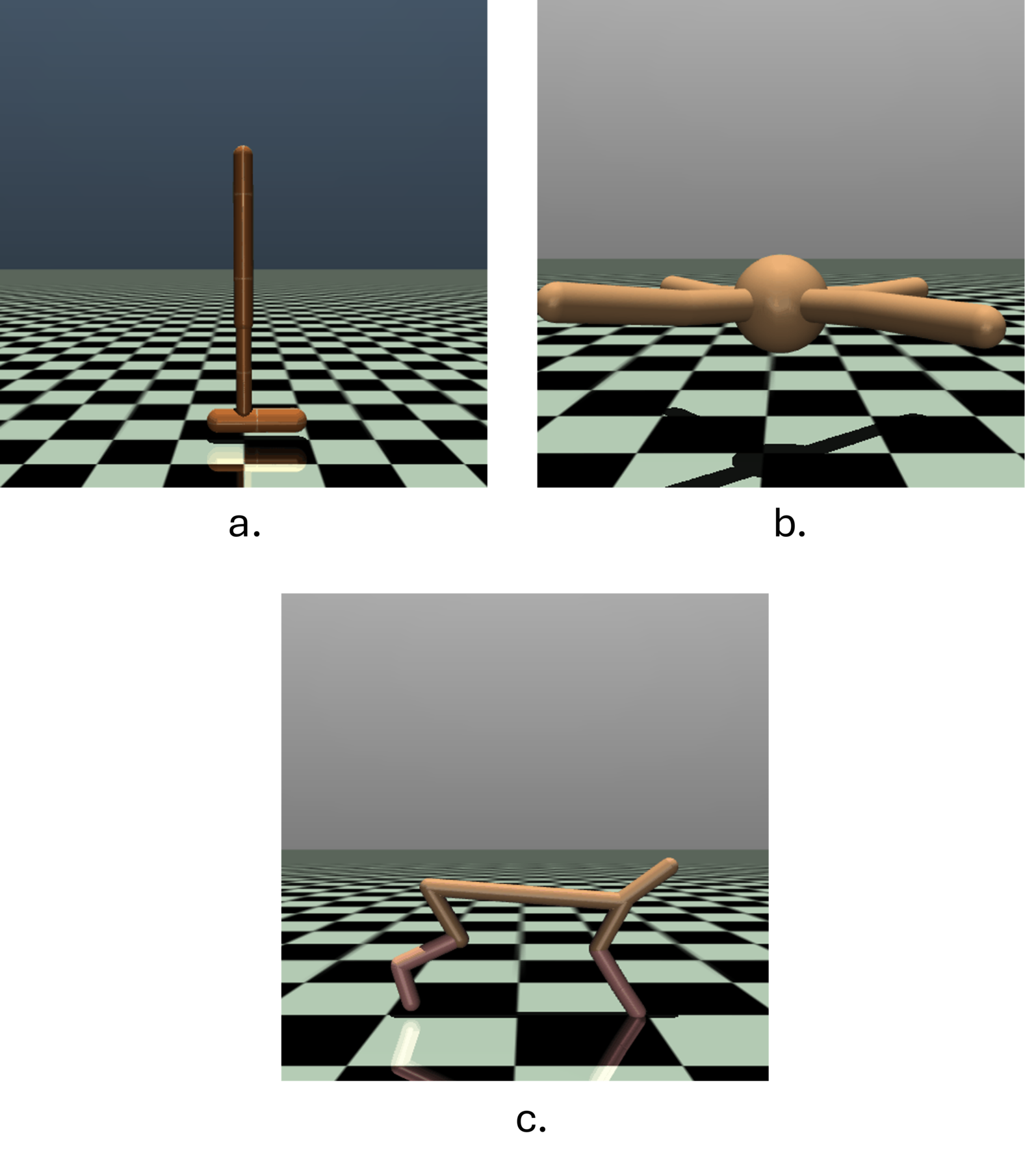}}
\caption{Mujoco locomotion (a) Hopper, with state dimension $|\mathcal{S}|=11$, action dimension $|\mathcal{A}|=3$, and reward function $r=1-4|v_x-v_g|$ (b) Ant, $|\mathcal{S}|=111$, $|\mathcal{A}|=8$, and $r=1-3|v_x-v_g|$ (c) Halfcheetah, $|\mathcal{S}|=20$, $|\mathcal{A}|=6$, and $r=-|v_x-v_g|$. 
}
\label{fig:robot locomotion}
\end{center}
\end{figure}

\subsection{Baselines}
\label{subsec:baselines}
We employ five baseline methods, including Naive, Robust Policy, Adaptive, MAML, and LLIRL for comparative evaluation. The details of each method are as follows:
\begin{enumerate}
\item \textit{Naive}: It refers to simply training a single policy sequentially over a series of tasks without explicitly accounting for task variation.

\item  \textit{Robust Policy}: It leverages domain randomization \cite{tobin2017domain, sheckells2019using, peng2018sim} to train a robust policy that is supposed to work across all environments, and the agent cannot infer the specific environment context.

\item  \textit{Adaptive}: It leverages a long short-term memory (LSTM) network to process a sequence of past observations and uses domain randomization to improve generalization across varying environments.

\item  \textit{MAML} \cite{finn2017model,al2017continuous}: It learns a meta-policy that can quickly adapt to new tasks using only a few training samples. By exploiting the shared structure across tasks, it enables efficient fine-tuning in previously unseen environments.

\item  \textit{LLIRL}  \cite{wang2021lifelong}: It uses expectation maximization (EM) with a Chinese Restaurant Process (CRP) prior to maintain and update a mixture of models for different tasks in an online manner.

\end{enumerate}

\subsection{Metrics}
\label{subsec:metrics}
Four metrics are utilized to evaluate the effectiveness of DGCRL. The first three metrics are adapted from those used in \cite{wolczyk2022disentangling}, which measure the average performance, forward transfer capability, and forgetting. The fourth metric, adapted from \cite{wang2021lifelong}, measures the average return over episodes for different tasks. The performance (i.e. return) for task $i$ at time \(t\) is termed as $p_{i}(t)$, and each of the $N$ tasks is trained for $\Delta$ steps. Therefore, the total number of steps for all tasks is $T=N\cdot \Delta$.

\textbf{Average Performance} (AP): The average performance at time \(t\) is defined as:  
\begin{equation}
    P(t):=\frac{1}{N}\sum^{N}_{i=1}p_{i}(t).
\end{equation}
We evaluate the final average performance, denoted as $P(N)$.

\textbf{Forward Transfer} (FT): FT is used to measure the agent's ability to leverage knowledge acquired from previous tasks to improve the learning process of new tasks. The FT for task $i$ is defined as the normalized area of the gap between the candidate method's training curve ($AUC_{i}$) and the reference training curve ($AUC_{i}^{b}$). Denoting the reference performance as $p_{i}^{b}$, the forward transfer on task $i$, $FT_{i}$, is defined as:
\begin{equation}
FT_{i}:=\frac{AUC_{i}-AUC_{i}^{b}}{max[A,AUC^{b}]-AUC_{i}^{b}},
\end{equation}
where the candidate method's training curve is:
\begin{equation}
    AUC_{i}:=\frac{1}{\Delta}\int_{(i-1)\cdot \Delta}^{i\cdot \Delta}(p_{i}(t)+C)dt,
\end{equation}
the reference training curve is:
\begin{equation}
    AUC_{i}^{b}:=\frac{1}{\Delta}\int_{0}^{\Delta}(p_{i}^{b}(t)+C)dt,
\end{equation}
and 
$A:=\{AUC_{i}\mid i=1,2,\ldots,N\}$. For all tasks, the average forward transfer is defined as:
\begin{equation}
    FT=\frac{1}{N}\sum_{i=1}^{N}FT_{i}.
\end{equation}

Without compromising the fairness of comparisons, we adapt the metric used in the original works and use return values rather than a success rate as the performance metric. The value of $p_{i}(t)$ is consequently not necessarily restricted to the range between 0 and 1. We utilize the maximum area among the training curves to appropriately normalize the FT metric rather than value 1 as in the original works. Additionally, given that return values can be negative in some environments, such as the half-cheetah, we introduce a constant factor $C$, to ensure that the calculated area remains non-negative. This adjustment is necessary to maintain the mathematical integrity and relevance of our performance assessment.

\textbf{Forgetting}: Forgetting is the decline in a method's performance on previously learned tasks after learning new tasks. Take task $i$ as an example, the forgetting is measured as the performance drop between the moment when the agent just finishes the training on task $i$ and the moment when the whole training process is over, denoted as:
\begin{equation}
   F_{i}=p_{i}(i\cdot \Delta)-p_{i}(T).
\end{equation}
The forgetting metric over the whole training process is therefore: 
\begin{equation}
    F=\frac{1}{N}\sum_{i=1}^{N}F_{i}.
\end{equation}

\textbf{Average return over episodes for different tasks} (Avg. Return):
This metric is to assess the policy's learning performance across various tasks. The average return for the total $N$ tasks in the $j$-th episode is defined as:
\begin{equation}
    r_{j}=\frac{1}{N}\sum_{i=1}^{N}r_{i,j}.
\end{equation}
This metric is represented in the form of a learning curve. Compared with the metric AP, which evaluates the final average performance across all tasks, the metric Avg. Return evaluates the policy's learning dynamics by assessing the average return per episode, focusing on the training process.

\section{Experimental Results}
\label{sec:experiment result}
We evaluate DGCRL on 2D Navigation and MuJoCo locomotion environments, comparing it to baseline methods using the metrics described above. We further conduct a sensitivity analysis to assess the impact of demonstration quantity on performance. Two ablation studies are also included: one examining the effect of resetting different components of the TD3 actor-critic architecture, and the other isolating the benefit of combining self-evolving demonstration guidance with curriculum-based exploration. All results are averaged over multiple random seeds. We use stratified bootstrap confidence intervals and interquartile mean (IQM), following the recommendation in \cite{agarwal2021deep} for robust and reliable few-run evaluations.

\subsection{Main Experiment}
\label{subsubsec:Main Experiment}
The experimental results of \textit{Average Performance}, \textit{Forward Transfer}, and \textit{Forgetting} for three navigation tasks (Navigation v1, Navigation v2, and Navigation v3) and three locomotion tasks (Hopper, Ant, and Half Cheetah) are presented in Table \ref{tab:summary navigation}, where the arrows indicate the direction of improvement and the best performance for each metric is highlighted in bold. 

DGCRL demonstrates consistently superior performance over baselines across almost all environments and metrics, achieving the best average performance, the highest forward transfer, and the lowest forgetting. For instance, in the Hopper environment, DGCRL achieves the highest average performance, significantly outperforming the second-best MAML and the third-best Adaptive. It also attains the highest forward transfer, greatly surpassing the second-best Robust Policy and the third-best MAML. Furthermore, DGCRL achieves the lowest forgetting, substantially reducing forgetting compared to the second-best Adaptive and the third-best MAML. This indicates the significant advantages of DGCRL in balancing stability and plasticity in dynamic environments. 

Notably, DGCRL exhibits negative forgetting. In conventional continual learning, the forgetting metric implicitly assumes that each task achieves its best performance at the end of training. However, this does not always hold in CRL under limited training budgets, where convergence often leads to suboptimal policies. Due to the stochasticity of exploration and the instability of function approximation, better solutions may temporarily appear during training but fail to be retained, which is termed as "derailment" in the RL literature \cite{ecoffet2021first}. This implies that superior demonstrations may arise during the learning process. Furthermore, when tasks share similar dynamics, exploration in later tasks may also uncover solutions that achieve higher returns on earlier tasks. In our method, through the self-evolving demonstration strategy, such high-quality demonstrations are continuously stored in an external repository, and during evaluation the agent can retrieve the most suitable demonstration to directly guide its behavior. Consequently, negative forgetting values may occur. Overall, these findings highlight both the limitations of the conventional forgetting metric in CRL and the importance of storing prior knowledge continuously that can be easily retrieved.


\begin{table}[htbp]
\captionsetup{skip=1pt}
\fontsize{9}{10}\selectfont
\renewcommand{\arraystretch}{1.3}
\setlength{\tabcolsep}{2pt}
\caption{Summary of Evaluation - Main Experiment}
\label{tab:summary navigation}
\begin{center}
\begin{tabular}{l|ccc|ccc}
\toprule
& \multicolumn{6}{c}{\textbf{Metric}}\\
\midrule
\textbf{Baselines}&\textbf{AP}$\uparrow$&\textbf{FT}$\uparrow$&\textbf{Forgetting}$\downarrow$&\textbf{AP}$\uparrow$&\textbf{FT}$\uparrow$&\textbf{Forgetting}$\downarrow$\\
\midrule
& \multicolumn{3}{c}{\textbf{Navigation v1}}&\multicolumn{3}{c}{\textbf{Navigation v2}}\\
\midrule
Naive&$-45.93^{+5.36}_{-6.31}$&$0.43^{+0.17}_{-0.17}$&$23.28^{+4.78}_{-3.7}$&$-66.11^{+21.57}_{-13.46}$&$0.51^{+0.06}_{-0.15}$&$21.37^{+23.93}_{-11.25}$\\
\midrule
Robust Policy&$-54.66^{+1.06}_{-3.26}$&$0.46^{+0.07}_{-0.11}$&$27.49^{+1.7}_{-0.7}$&$-61.07^{+2.14}_{-4.5}$&$0.27^{+0.07}_{-0.04}$&$19.4^{+2.93}_{-1.83}$\\
\midrule
Adaptive&$-78.64^{+0.92}_{-1.11}$&$-0.02^{+0.19}_{-0.22}$&$22.2^{+0.84}_{-1.08}$&$-98.18^{+2.21}_{-1.69}$&$-0.41^{+0.16}_{-0.08}$&$22.24^{+3.15}_{-1.4}$\\
\midrule
MAML&$-43.18^{+0.5}_{-0.9}$&$0.62^{+0.07}_{-0.09}$&$22.91^{+0.46}_{-0.27}$&$-41.71^{+0.26}_{-0.41}$&$0.55^{+0.04}_{-0.02}$&$10.52^{+1.07}_{-0.79}$\\
\midrule
LLIRL&$-47.08^{+2.67}_{-2.21}$&$0.6^{+0.09}_{-0.09}$&$31.97^{+1.84}_{-2.59}$&$-50.27^{+16.37}_{-28.51}$&$0.45^{+0.13}_{-0.13}$&$4.35^{+20.43}_{-10.49}$\\
\midrule
DGCRL (ours)&$\bm{-6.74^{+0.1}_{-0.13}}$&$\bm{0.82^{+0.03}_{-0.05}}$&$\bm{-1.31^{+0.32}_{-0.94}}$&$\bm{-7.72^{+0.02}_{-0.03}}$&$\bm{0.69^{+0.04}_{-0.05}}$&$\bm{-15.52^{+6.26}_{-14.24}}$\\
\midrule
& \multicolumn{3}{c}{\textbf{Navigation v3}}&\multicolumn{3}{c}{\textbf{Hopper}}\\
\midrule
Naive&$-47.83^{+1.69}_{-1.99}$&$-0.03^{+0.13}_{-0.18}$&$22.72^{+3.2}_{-1.84}$&$-12.28^{+7.98}_{-16.28}$&$-0.14^{+0.1}_{-0.01}$&$35.65^{+31.85}_{-22.72}$\\
\midrule
Robust Policy&$-57.39^{+1.84}_{-2.57}$&$-0.15^{+0.2}_{-0.34}$&$27.65^{+1.63}_{-0.82}$&$-4.35^{+0.75}_{-4.04}$&$-0.06^{+0.19}_{-0.29}$&$40.51^{+16.55}_{-18.16}$\\
\midrule
Adaptive&$-78.89^{+1.64}_{-0.46}$&$-1.06^{+0.38}_{-0.46}$&$20.77^{+0.6}_{-0.96}$&$-3.97^{+0.76}_{-1.63}$&$-0.12^{+0.05}_{-0.11}$&$34.22^{+13.09}_{-10.99}$\\
\midrule
MAML&$-45.7^{+0.24}_{-0.42}$&$0.14^{+0.18}_{-0.18}$&$23.05^{+0.53}_{-0.19}$&$-3.12^{+0.26}_{-6.23}$&$-0.08^{+0.02}_{-0.21}$&$35.72^{+5.25}_{-11.14}$\\
\midrule
LLIRL&$-31.61^{+0.56}_{-0.19}$&$0.41^{+0.1}_{-0.17}$&$14.63^{+0.8}_{-0.56}$&$-25.84^{+10.88}_{-19.74}$&$-0.14^{+0.1}_{-0.01}$&$60.66^{+27.08}_{-16.51}$\\
\midrule
DGCRL (ours)&$\bm{-3.25^{+0.28}_{-0.48}}$&$\bm{0.65^{+0.04}_{-0.06}}$&$\bm{-9.46^{+6.26}_{-12.2}}$&$\bm{93.85^{+0.14}_{-0.12}}$&$\bm{0.8^{+0.02}_{-0.03}}$&$\bm{-3.47^{+1.08}_{-3.17}}$\\
\midrule
& \multicolumn{3}{c}{\textbf{Ant}}&\multicolumn{3}{c}{\textbf{Half Cheetah}}\\
\midrule
Naive&$38.32^{+3.59}_{-1.71}$&$-1.64^{+1.16}_{-3.18}$&$-9.77^{+25.91}_{-28.3}$&$-78.86^{+2.8}_{-5.47}$&$-4.13^{+0.37}_{-0.21}$&$53.0^{+3.71}_{-2.23}$\\
\midrule
Robust Policy&$-12.51^{+30.03}_{-5.7}$&$-2.48^{+1.82}_{-2.76}$&$19.33^{+37.15}_{-17.74}$&$-81.63^{+8.46}_{-9.47}$&$-4.49^{+0.61}_{-2.34}$&$49.83^{+2.17}_{-1.97}$\\
\midrule
Adaptive&$32.05^{+8.0}_{-34.1}$&$-0.8^{+0.87}_{-1.01}$&$32.85^{+8.61}_{-7.69}$&$-76.98^{+2.37}_{-5.15}$&$-3.96^{+0.6}_{-0.5}$&$46.27^{+1.48}_{-3.02}$\\
\midrule
MAML&$45.83^{+3.58}_{-6.64}$&$\bm{-0.34^{+0.48}_{-0.33}}$&$21.77^{+3.62}_{-5.08}$&$-66.46^{+1.11}_{-3.11}$&$-0.34^{+0.48}_{-0.33}$&$42.56^{+10.33}_{-11.55}$\\
\midrule
LLIRL&$6.52^{+45.1}_{-31.3}$&$-2.99^{+1.84}_{-2.28}$&$3.2^{+5.52}_{-29.28}$&$-27.07^{+1.03}_{-0.68}$&$-3.25^{+1.34}_{-0.76}$&$8.39^{+4.04}_{-1.19}$\\
\midrule
DGCRL (ours)&$\bm{80.25^{+4.83}_{-8.35}}$&$-0.5^{+0.49}_{-0.41}$&$\bm{-12.97^{+3.88}_{-6.71}}$&$\bm{-3.58^{+0.17}_{-0.63}}$&$\bm{0.25^{+0.04}_{-0.02}}$&$\bm{-3.68^{+0.68}_{-0.99}}$\\
\bottomrule
\end{tabular}
\end{center}
\end{table}

The only exception is the forward transfer metric on the Ant environment, where DGCRL slightly lags behind MAML but remains competitive. This can be attributed to MAML's ability to effectively capture cross-task knowledge during meta-training, allowing for rapid adaptation and relatively good performance on new tasks. However, since MAML is not explicitly designed to retain task-specific knowledge, its performance in terms of forgetting is less impressive.

LLIRL performs reasonably well in simple navigation tasks but exhibits a significant decline in more complex tasks, such as Hopper and Ant, raising concerns about its robustness, a critical requirement in safe RL. Unfortunately, we were unable to fully replicate the original work’s reported results.

Interestingly, we observe that despite lacking specific mechanisms for environment identification or domain randomization, the \textit{Naive} method remains competitive with the enhanced baselines \textit{Robust Policy} and \textit{Adaptive}, and even outperforms them in certain metrics across some environments. It indicates that domain randomization and using LSTM to retain the knowledge of past tasks are not silver-bullet solutions for handling CRL tasks.

Figure \ref{fig: avg return} illustrates the results of \textit{Average return over episodes} for different tasks. DGCRL consistently outperforms baselines, except in the Ant environment where it performs comparably to MAML. We note that a brief initial drop is observed in some tasks (e.g., Ant and Navigation), which can be attributed to the transition from demonstration-guided exploration to self-exploration. At the beginning, the agent fully imitates the demonstration to achieve rapid early gains; performance then temporarily declines as reliance on demonstrations is reduced and self-exploration increases. This leads to temporary fluctuation, but as exploration policy improves, training performance correspondingly recovers and rapidly increases. These results highlight the benefit of demonstrations in providing a strong initial policy and facilitating faster learning. Furthermore, we observe fluctuations in policy performance during the later stages of training in environments like Navigation v2 and v3. This instability may stem from weak correlations between tasks.

\begin{figure}[hpt]
\begin{center}
\centerline{\includegraphics[width=\columnwidth]{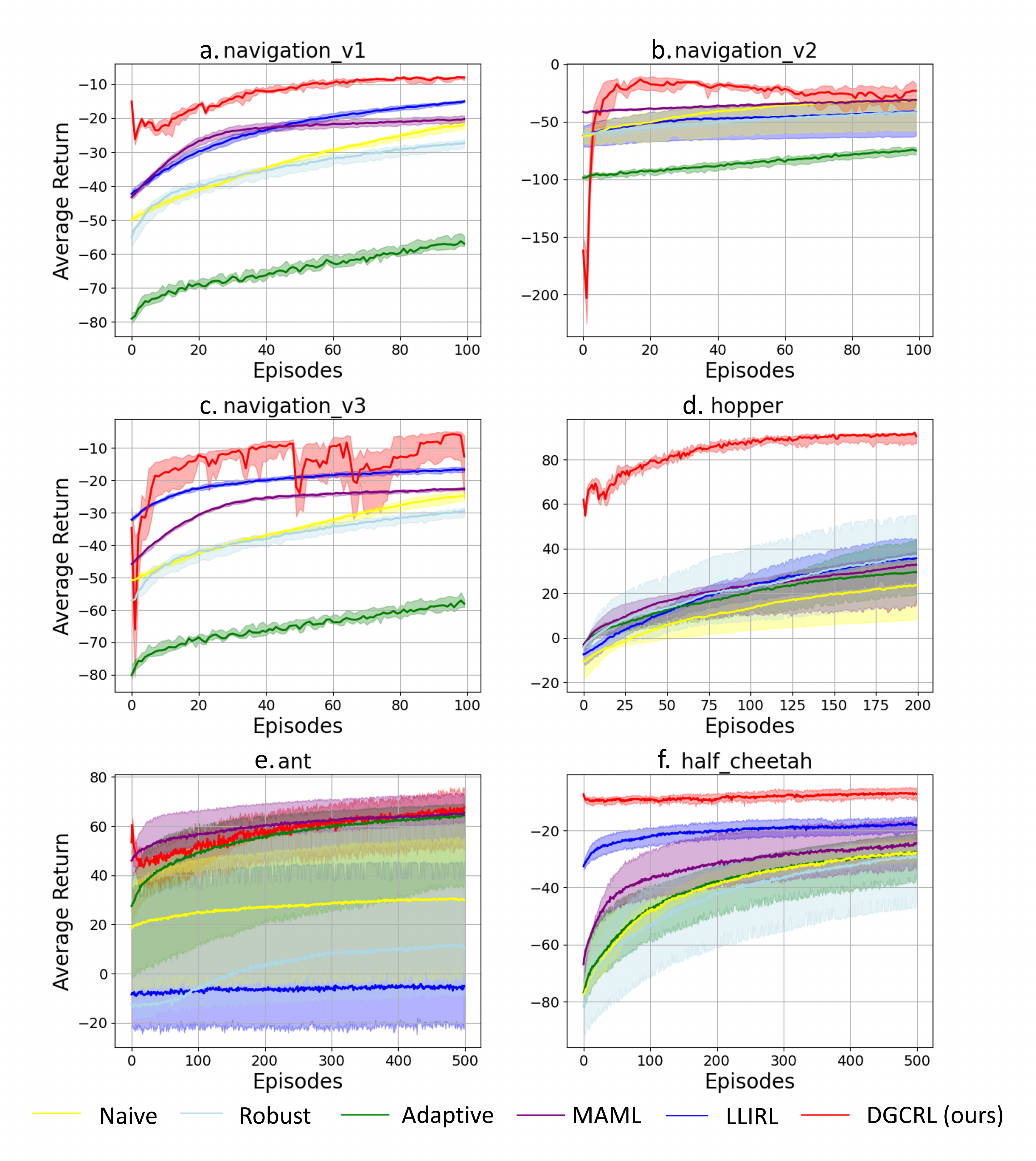}}
\caption{Average return per episode - Main Experiment}
\label{fig: avg return}
\end{center}
\end{figure}

\subsection{Sensitivity Analysis}
This experiment evaluates the impact of different proportions of initial demonstrations, i.e., 20\%, 60\%, and 100\% of the full set\footnote{The full set denotes the number of demonstrations employed in the main experiments.} (sampled randomly), on CRL performance in different benchmark environments. Note that varying the size of the initial demonstration set implicitly changes the demonstration quality: a smaller set covers fewer scenario variations and therefore represents a more sub-optimal demonstration pool. Moreover, a larger initial demonstration set increases the likelihood of selecting a higher-quality demonstration for each task. The performance differences thus also reflect DGCRL’s robustness to sub-optimal initial demonstrations.

Table \ref{tab:sensitivity analysis} presents the results of \textit{Average Performance}, \textit{Forward Transfer}, and \textit{Forgetting}. It shows that DGCRL equipped with the full demonstration set outperforms the others in average performance and forward transfer in most environments. For instance, in Hopper, the full demonstration set leads to the highest average performance and forward transfer, significantly surpassing the reduced sets. Although it ranks second in average performance for Navigation v3 and in forward transfer for Ant, its performance remains highly competitive. 

Additionally, we note that the number of demonstrations has limited impact on the forgetting metric, and some smaller demonstration sets even exhibit better results. This may be attributed to the weaker initial conditions associated with a smaller demonstration set, which results in lower early performance. As training progresses, these models undergo substantial improvement, making the reduction in forgetting performance appear more pronounced. In contrast, models with the full demonstration set start closer to their peak performance due to the comprehensive initial guidance, leading to less noticeable forgetting.

\begin{table}
\fontsize{9}{11}\selectfont
\begin{center}
\renewcommand{\arraystretch}{1.4}
\setlength{\tabcolsep}{2pt}
\caption{Summary of Evaluation - Sensitivity Analysis}
\label{tab:sensitivity analysis}
\begin{tabular}{l|ccc|ccc}
\toprule
& \multicolumn{6}{c}{\textbf{Metric}}\\
\midrule
\textbf{Baselines}&\textbf{AP}$\uparrow$&\textbf{FT}$\uparrow$&\textbf{Forgetting}$\downarrow$&\textbf{AP}$\uparrow$&\textbf{FT}$\uparrow$&\textbf{Forgetting}$\downarrow$\\
\midrule
& \multicolumn{3}{c}{\textbf{Navigation v1}}&\multicolumn{3}{c}{\textbf{Navigation v2}}\\
\midrule
20\% demo set
&$-23.01^{+0.63}_{-1.75}$
&$-1.0^{+0.39}_{-1.29}$
&$\bm{-119.8^{+22.56}_{-48.89}}$
&$-11.6^{+3.83}_{-1.98}$
&$0.24^{+0.11}_{-0.12}$
&$\bm{-19.15^{+8.19}_{-8.09}}$
\\
\midrule
60\% demo set 
&$-22.97^{+0.59}_{-1.79}$
&$-1.44^{+0.74}_{-1.19}$
&$-103.99^{+8.87}_{-49.95}
$
&$-7.86^{+0.08}_{-0.06}$
&$0.31^{+0.03}_{-0.05}$
&$-15.23^{+4.44}_{-9.96}$
\\
\midrule
100\% demo set 
&$\bm{-6.74^{+0.1}_{-0.13}}$
&$\bm{0.82^{+0.03}_{-0.05}}$
&$-1.31^{+0.32}_{-0.94}$
&$\bm{-7.72^{+0.02}_{-0.03}}$
&$\bm{0.69^{+0.03}_{-0.05}}$
&$-15.52^{+6.26}_{-14.24}$
\\
\midrule
& \multicolumn{3}{c}{\textbf{Navigation v3}}&\multicolumn{3}{c}{\textbf{Hopper}}\\
\midrule
20\% 
demo set
&$-8.87^{+1.47}_{-0.73}$
&$-0.68^{+0.47}_{-0.37}$
&$\bm{-20.04^{+11.43}_{-18.06}}$
&$64.4^{+2.35}_{-1.0}$
&$0.28^{+0.05}_{-0.07}$
&$-5.94^{+1.05}_{-1.45}$
\\
\midrule
60\% demo set 
&$\bm{-3.18^{+0.57}_{-0.55}}$
&$0.47^{+0.13}_{-0.83}$
&$-14.79^{+0.94}_{-9.02}$
&$67.21^{+1.77}_{-1.08}$
&$0.32^{+0.06}_{-0.09}$
&$\bm{-6.91^{+2.79}_{-2.17}}$

\\
\midrule
100\% demo set 
&$-3.25^{+0.32}_{-0.41}$
&$\bm{0.65^{+0.04}_{-0.07}}$
&$-9.46^{+6.26}_{-12.2}$
&$\bm{93.85^{+0.13}_{-0.16}}$
&$\bm{0.8^{+0.02}_{-0.03}}$
&$-3.47^{+1.08}_{-3.21}$
\\
\midrule
& \multicolumn{3}{c}{\textbf{Ant}}&\multicolumn{3}{c}{\textbf{Half Cheetah}}\\
\midrule
20\% demo set
&$77.15^{+1.28}_{-1.88}$
&$-0.58^{+0.62}_{-0.39}$
&$-9.41^{+1.99}_{-2.45}$
&$-14.58^{+1.13}_{-1.69}$
&$-2.44^{+0.41}_{-0.34}$
&$\bm{-8.71^{+2.38}_{-2.51}}$
\\
\midrule
60\% demo set 
&$77.93^{+1.71}_{-1.48}$
&$\bm{-0.49^{+0.64}_{-0.48}}$
&$-10.07^{+3.2}_{-4.01}$
&$-13.1^{+1.02}_{-1.46}$
&$-1.97^{+0.45}_{-0.25}$
&$-6.3^{+1.11}_{-0.59}$
\\
\midrule
100\% demo set
&$\bm{80.25^{+4.83}_{-7.45}}$
&$-0.5^{+0.49}_{-0.41}$
&$\bm{-12.97^{+3.59}_{-8.67}}$
&$\bm{-3.58^{+0.73}_{-0.75}}$
&$\bm{0.25^{+0.07}_{-0.15}}$
&$-3.68^{+1.33}_{-1.66}$
\\

\bottomrule
\end{tabular}
\end{center}
\end{table}

Figure~\ref{fig: sensitivity analysis avg return} presents the \textit{Average return over episodes}, where the green curve represents training with the full demonstration set, and the blue and red curves denote reduced sets. Overall, more demonstrations lead to better initialization, faster convergence, and improved final performance. An exception is observed in the Ant environment, where additional demonstrations primarily benefit early learning but have limited long-term impact. This is likely due to the environment’s complexity and limited marginal gain from extra trajectories. Nevertheless, even with minimal demonstrations, DGCRL consistently outperforms all baselines.

\begin{figure}[hpt]
\begin{center}
\centerline{\includegraphics[width=\columnwidth]{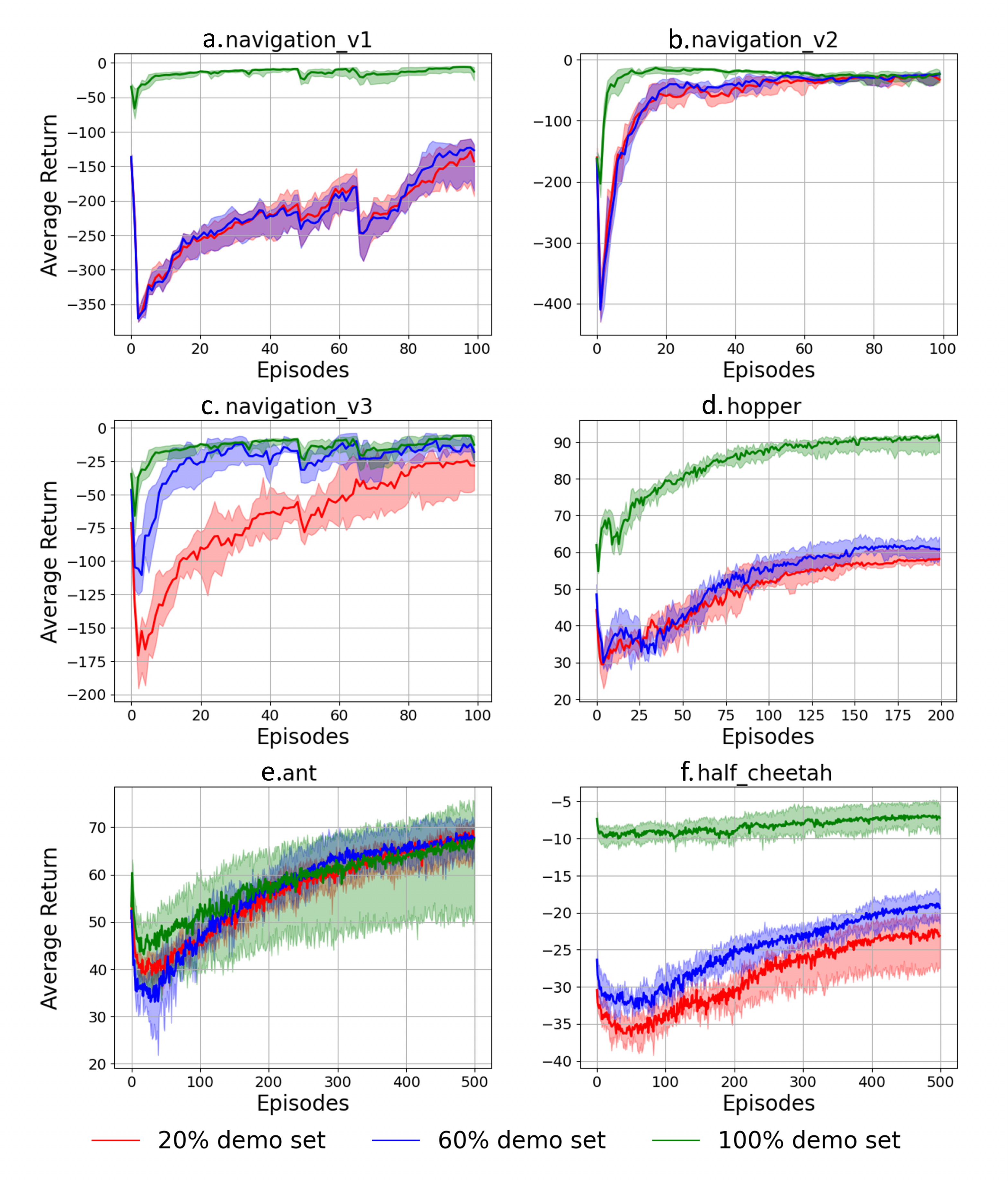}}
\caption{Average return per episode - Sensitivity Analysis}
\label{fig: sensitivity analysis avg return}
\end{center}
\end{figure}

\subsection{Ablation Study}
To better understand the contribution of the core components in DGCRL, we conduct two ablation studies: one examining the effect of different resetting strategies in the actor-critic architecture, and another isolating the role of curriculum-guided exploration by comparing DGCRL with trajectory-level replay baselines that only access past demonstrations.
\subsubsection{Impact of Resetting Strategies}
Since DGCRL builds on the TD3 actor-critic framework, we analyze the effect of different resetting strategies: actor-only, critic-only, and joint reset on performance. A comparative analysis of the \textit{Average Performance}, \textit{Forward Transfer}, and \textit{Forgetting} metrics is shown in Table~\ref{tab:ablation study}.

\begin{table}[htbp]
\captionsetup{skip=1pt}
\fontsize{9}{11}\selectfont
\renewcommand{\arraystretch}{1.4}
\setlength{\tabcolsep}{2pt}
\caption{Summary of Evaluation - Impact of Resetting Strategies}
\label{tab:ablation study}
\begin{center}
\begin{tabular}{l|ccc|ccc}
\toprule
& \multicolumn{6}{c}{\textbf{Metric}}\\
\midrule
\textbf{Baselines}&\textbf{AP}$\uparrow$&\textbf{FT}$\uparrow$&\textbf{Forgetting}$\downarrow$&\textbf{AP}$\uparrow$&\textbf{FT}$\uparrow$&\textbf{Forgetting}$\downarrow$\\
\midrule
& \multicolumn{3}{c}{\textbf{Navigation v1}}&\multicolumn{3}{c}{\textbf{Navigation v2}}\\
\midrule
Reset actor
&$-21.32^{+3.75}_{-1.51}$
&$0.57^{+0.13}_{-0.29}$
&$\bm{-28.17^{+40.25}_{-59.18}}$
&$-7.81^{+0.06}_{-0.04}$
&$0.65^{+0.09}_{-0.06}$
&$-10.94^{+4.33}_{-5.11}$
\\
\midrule
Reset critic
&$-19.98^{+3.75}_{-2.89}$
&$0.59^{+0.22}_{-0.26}$
&$-3.65^{+18.98}_{-72.85}$
&$-7.76^{+0.06}_{-0.01}$
&$0.43^{+0.05}_{-0.06}$
&$\bm{-47.36^{+17.08}_{-12.24}}$
\\
\midrule
Reset both
&$\bm{-6.74^{+0.1}_{-0.11}}$
&$\bm{0.82^{+0.03}_{-0.05}}$
&$-1.31^{+0.32}_{-0.94}$
&$\bm{-7.72^{+0.02}_{-0.03}}$
&$\bm{0.69^{+0.04}_{-0.05}}$
&$-15.52^{+6.26}_{-14.24}$
\\
\midrule
& \multicolumn{3}{c}{\textbf{Navigation v3}}&\multicolumn{3}{c}{\textbf{Hopper}}\\
\midrule
Reset actor
&$-3.42^{+0.18}_{-0.5}$
&$0.49^{+0.13}_{-0.03}$
&$-15.75^{+7.04}_{-4.53}$
&$71.16^{+0.26}_{-1.08}$
&$0.39^{+0.06}_{-0.03}$
&$-5.59^{+1.71}_{-0.55}$
\\
\midrule
Reset critic
&$-3.46^{+0.79}_{-1.18}$
&$0.43^{+0.37}_{-0.92}$
&$\bm{-21.91^{+19.63}_{-28.64}}$
&$69.24^{+0.99}_{-0.43}$
&$0.32^{+0.03}_{-0.02}$
&$\bm{-5.94^{+1.37}_{-0.83}}$

\\
\midrule
Reset both
&$\bm{-3.25^{+0.32}_{-0.41}}$
&$\bm{0.65^{+0.04}_{-0.07}}$
&$-9.46^{+6.26}_{-12.2}$
&$\bm{93.85^{+0.13}_{-0.16}}$
&$\bm{0.8^{+0.02}_{-0.03}}$
&$-3.47^{+1.08}_{-3.21}$
\\
\midrule
& \multicolumn{3}{c}{\textbf{Ant}}&\multicolumn{3}{c}{\textbf{Half Cheetah}}\\
\midrule
Reset actor
&$77.15^{+1.1}_{-2.9}$
&$-0.65^{+0.69}_{-0.39}$
&$-9.41^{+1.62}_{-4.13}$
&$-26.14^{+2.76}_{-3.21}$
&$-5.4^{+1.61}_{-0.56}$
&$\bm{-20.01^{+7.32}_{-0.57}}$
\\
\midrule
Reset critic
&$76.47^{+1.96}_{-1.98}$
&$-0.51^{+0.42}_{-0.43}$
&$-9.25^{+2.81}_{-2.31}$
&$-28.66^{+4.38}_{-7.72}$
&$-5.59^{+0.91}_{-1.15}$
&$-18.18^{+4.11}_{-2.31}$
\\
\midrule
Reset both
&$\bm{80.25^{+4.6}_{-8.35}}$
&$\bm{-0.5^{+0.49}_{-0.41}}$
&$\bm{-12.97^{+3.59}_{-8.67}}$
&$\bm{-3.58^{+0.73}_{-0.75}}$
&$\bm{0.25^{+0.07}_{-0.15}}$
&$-3.68^{+1.33}_{-1.66}$
\\

\bottomrule
\end{tabular}
\end{center}
\end{table}

\begin{figure}[hpt]
\begin{center}
\centerline{\includegraphics[width=\columnwidth]{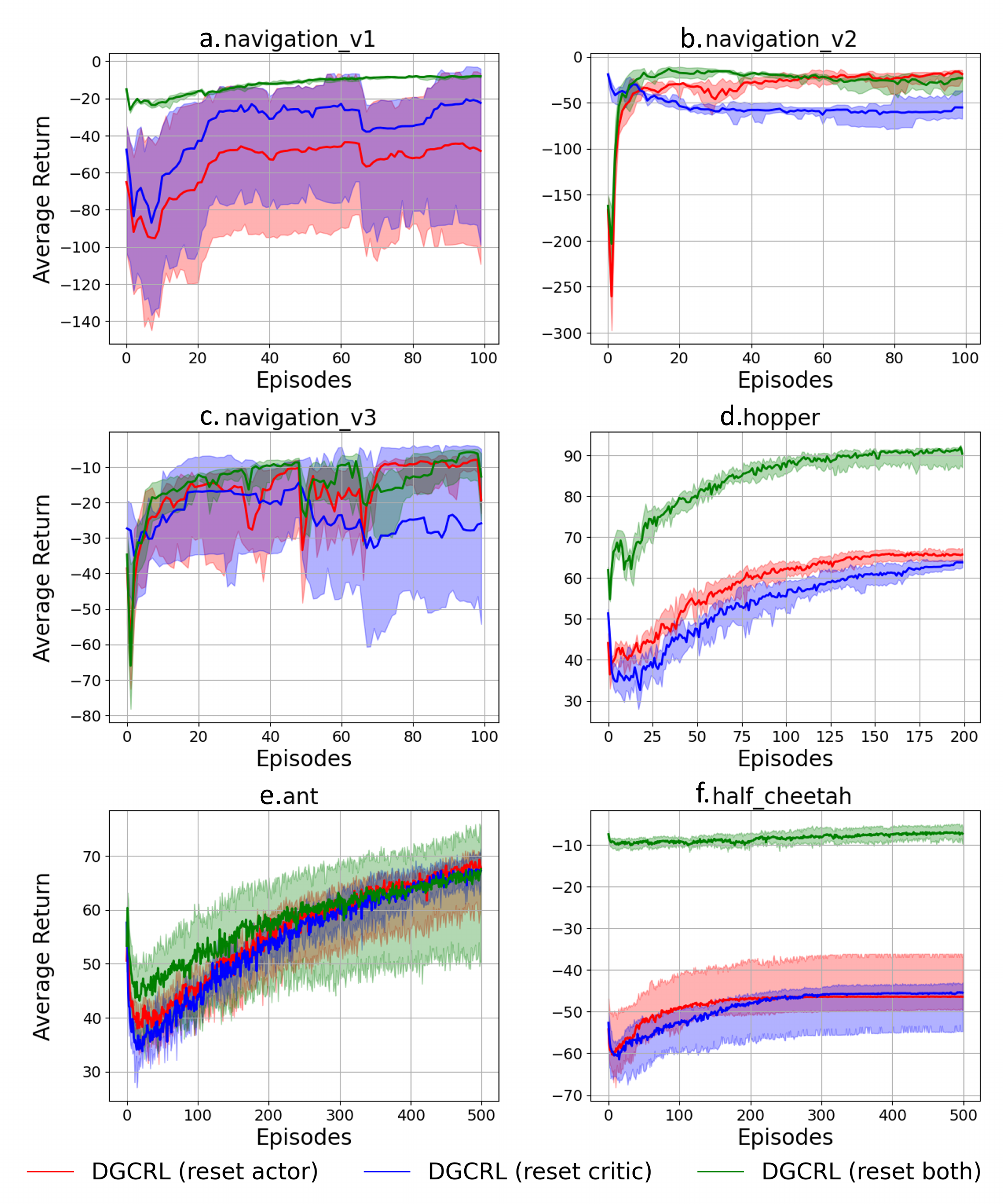}}
\caption{Average return per episode - Impact of Resetting Strategies}
\label{fig: ablation study avg return}
\end{center}
\end{figure}

We observe that resetting both the actor and critic generally yields better performance, particularly in average return and forward transfer. While not always achieving the lowest forgetting, it remains negative and highly competitive, particularly when compared with baseline results in Table~\ref{tab:summary navigation}. These findings support our choice to reset both components in the main experiments within the DGCRL actor-critic framework.

The \textit{Average return per episode}, as shown in Figure \ref{fig: ablation study avg return}, highlights the training performance for different reset strategies: the green curve represents resetting both the actor and critic, while the blue and red curves correspond to resetting the critic and actor independently, respectively. Overall, resetting both components consistently yields superior training results and facilitates quicker adaptation to new challenges.

\subsubsection{Comparison with Trajectory-level Replay Baselines}
We additionally include two trajectory-level replay baselines for comparison:

\begin{enumerate}
\item \textit{Initial Trajectory Replay (ITR)}: This baseline reuses only the trajectory-level demonstrations in the initial demonstration repository. For each task, it selects and executes the best demonstration from this fixed set. No new demonstrations collected during training are incorporated.

\item \textit{Evolving Trajectory Replay (ETR)}: This baseline reuses all demonstrations collected by the DGCRL agent throughout the entire training process, where the demonstration repository keeps self-evolving, i.e.,  newly discovered high-return demonstrations are continuously stored into the initial set. For each task, it evaluates all available demonstrations from previous tasks and executes the best one.
\end{enumerate}

Note that the replay baselines do not learn a policy. Instead, they simply select the best demonstration for each task and execute it directly, allowing us to isolate the effect of the curriculum mechanism in DGCRL. Since ITR and ETR perform no task-wise adaptation or optimization, metrics such as forward transfer and forgetting are not applicable. We therefore report only the Average Performance (AP) metric for these baselines. 

Across all six benchmarks in Table \ref{tab:demo_vs_demo_guide}, DGCRL achieves the highest average performance, highlighting the benefits of direct demonstration guidance together with a self-evolving demonstration repository and curriculum-based exploration. ITR performs well only when the initial demonstrations are near-optimal. Since it never incorporates new demonstrations, its performance is limited by the quality of the initial demonstrations. ETR benefits from the self-evolving demonstration repository, as the agent discovers high quality demonstrations during training. However, both ITR and ETR underperform DGCRL, showing that simply accessing past demonstrations only is insufficient for CRL. Additionally, the average performance of ITR and ETR generally exceeds that of the other baseline methods in Table~\ref{tab:summary navigation}, indicating that leveraging demonstrations to directly guide the agent’s behavior provides a clear advantage.

These results confirm that DGCRL’s performance advantages arise not just from access to stored demonstrations, but from its full learning pipeline: direct demonstration guidance, self-evolving demonstration repository, and curriculum-based exploration.

\begin{table}[htbp]
\captionsetup{skip=1pt}
\fontsize{9}{10}\selectfont
\renewcommand{\arraystretch}{1.3}
\setlength{\tabcolsep}{4pt}
\caption{Average Performance Comparison: DGCRL vs. Replay Baselines}
\label{tab:demo_vs_demo_guide}
\begin{center}
\begin{tabular}{l|ccc}
\toprule
\textbf{Baselines} 
& \textbf{Navigation v1 }
& \textbf{Navigation v2 }
& \textbf{Navigation v3 } \\
\midrule
ITR 
& $-30.20$ 
& $-8.07$ 
& $-6.57$ \\
\midrule
ETR 
& $-20.13^{+1.19}_{-1.23}$ 
& $-7.95^{+0.05}_{-0.04}$ 
& $-6.31^{+0.37}_{-0.42}$ \\
\midrule
DGCRL (ours) 
& $-6.74^{+0.10}_{-0.13}$ 
& $-7.72^{+0.02}_{-0.03}$ 
& $-3.25^{+0.28}_{-0.48}$ \\
\bottomrule
\end{tabular}

\vspace{6pt}

\begin{tabular}{l|ccc}
\toprule
\textbf{Baselines} 
& \textbf{Hopper } 
& \textbf{Ant } 
& \textbf{Half Cheetah } \\
\midrule
ITR 
& $62.90$ 
& $62.47$ 
& $-87.60$ \\
\midrule
ETR 
& $67.48^{+0.43}_{-0.64}$ 
& $72.06^{+1.61}_{-1.25}$ 
& $-32.71^{+2.19}_{-2.49}$ \\
\midrule
DGCRL (ours) 
& $93.85^{+0.14}_{-0.12}$ 
& $80.25^{+4.83}_{-8.35}$ 
& $-3.58^{+0.17}_{-0.63}$ \\
\bottomrule
\end{tabular}

\end{center}
\end{table}

\section{Conclusion and Future Work}
\label{sec:conclusion}
Existing continual learning methods apply prior knowledge implicitly through neural network optimization without direct guidance on the agent's behavior, which may limit knowledge reuse and learning efficiency. Moreover, some methods such as \citep{nagabandi2018deep,wang2021lifelong,zhang2023dynamics} require modeling different environments separately, which increases computational cost and introduces additional uncertainty. In contrast, our proposed method, i.e., demonstration-guided continual reinforcement learning (DGCRL), solves the CRL problem by leveraging selected demonstrations to directly guide the agent's exploration behavior without environment modeling.

Specifically, DGCRL utilizes external demonstrations to store prior knowledge and guide the agent's exploration via a dynamic curriculum strategy: for each new task, it selects the most relevant demonstration and uses it as a prior policy to guide the agent to a better starting position for exploration. The curriculum-based learning gradually reduces the agent’s reliance on the demonstration, enabling a progressive transition from demonstration-guided to fully self-exploration. In this way, demonstrations directly guide RL exploration behavior and accelerate policy adaptation in a continual learning setting. Moreover, a self-evolving mechanism is used to continually store high-performance demonstrations into the repository during training.

Extensive experiments on continuous control tasks, including 2D navigation and MuJoCo locomotion, demonstrate that DGCRL consistently outperforms baseline methods by achieving the highest average performance, improved knowledge transfer, reduced forgetting, and training efficiency. We also note that the conventional forgetting metric is not suitable in demonstration-guided settings, as it can yield negative results. The sensitivity analysis indicates that a larger demonstration set generally leads to better performance. Furthermore, the ablation studies show that resetting both the actor and critic is important for stable and effective learning, and that combining a self-evolving demonstration repository with curriculum-based behavioral guidance is essential to DGCRL.

However, leveraging demonstrations to facilitate CRL remains in its early stages. Although our extensive evaluations on 2D Navigation and MuJoCo locomotion benchmarks confirm the effectiveness of DGCRL, we acknowledge that these are still simulation environments and extending DGCRL to real-world operational scenarios remains an important future direction. Particularly, DGCRL has high potential to suit real-world scenarios where exploration is expensive or safety-critical. By leveraging stored demonstrations, including suboptimal ones, it enables fast policy adaptation and improves the long-term reliability of RL agents in dynamic environments. Furthermore, in our experiments, the initial repository contains only 50-60 demonstrations. Although under the self-evolving strategy, new high-quality demonstrations were progressively added during training, the scale remained relatively small and no retrieval inefficiency was observed. Nevertheless, when scaling to larger demonstration repositories, retrieval efficiency may become a bottleneck. A promising direction for future work is to explore efficient demonstration management strategies, such as indexing, labeling, pruning (e.g., clustering compression based on task similarity), or maintaining a fixed-size repository. Finally, conventional forgetting metrics have inherent limitations in demonstration-guided settings as negative scores may be produced, suggesting the need for more appropriate evaluation metrics.

\section*{Acknowledgements}
This work is funded by the College of Science and Engineering Postgraduate Research Scholarship of the University of Galway and Government of Ireland Postgraduate Scholarship Research Ireland under Project ID: GOIPG/2022/2140. 
\bibliographystyle{elsarticle-num} 
\bibliography{reference}

\newpage 
\appendix
\section*{Appendix}
\section{Implementation Details}
\label{sec:hyperparameters}

We provide the hyperparameters of DGCRL in Table \ref{table:hyperparameter}. All experiments were implemented using PyTorch. The source code is available at: \url{https://github.com/XueYang0130/DGCRL.git}. For baseline implementation details, please refer to \url{https://github.com/HeyuanMingong/llirl}.

\begin{table}[ht]
\centering
\fontsize{10}{12}\selectfont
\renewcommand{\arraystretch}{1.2}
\setlength{\tabcolsep}{4pt}
\caption{DGCRL Hyperparameters and Their Values Across Environments}
\label{table:hyperparameter}
\begin{tabularx}{\textwidth}{l X}
\toprule
\textbf{Hyperparameter} & \textbf{Value \& Description} \\
\midrule
$\alpha$ (Actor learning rate)        
    & $3 \times 10^{-4}$\\
$\beta$ (Critic learning rate)        
    & $3 \times 10^{-4}$\\
$\gamma$ (Discount factor) 
    & $0.99$ (navigation v1, v2, v3), $0.95$ (hopper, ant, half-cheetah) \\
$\tau$ (Soft update rate)  
    & $0.005$ \\
Action Noise               
    & $0.05$ (navigation v1, v2, v3, hopper), $0.1$ (ant, half-cheetah)\\
Policy Noise               
    & $0.02$ (navigation v1, v2, v3), $0.1$ (hopper), $0.05$ (ant, half-cheetah) \\
Actor Update Delay         
    & $2$ \\
Actor Architecture      
    & [400, 300] \\

Critic Architecture 
    & [400, 300] \\

H (Task Horizon)
    & 100 \\
Episodes              
    & $100$ (navigation v1, v2, v3), $200$(hopper), $500$ (ant, half-cheetah) \\
Roll-back Steps            
    & $3$ (navigation v1), $1$ (navigation v2), $5$ (navigation v3), $10$ (hopper, ant, half-cheetah)\\
Demonstration Quantity & $50$ (navigation v1, v2, v3, ant, hopper); $60$ (half-cheetah) \\
\bottomrule
\end{tabularx}
\label{tab:dgtd3_hyperparams_simple}
\end{table}

\section{Theoretical Analysis}
\label{sec:theoretical analysis}
We provide the theoretical analysis of the regret upper bound and the sampling complexity. 

\subsection{Regret Upper Bound}
\label{subsec:regret upper bound}
In CRL settings with $N$ tasks, we represent the task sequence as a set of MDPs, i.e., $\{\mathcal{M}_{1}, \mathcal{M}_{2}, ..., \mathcal{M}_{N}\}$. We assume there is a demonstration set $\mathcal{D}_{k}$ for each $\mathcal{M}_{k}$. The policy that directly follows the demonstration is referred to as the guide policy $\pi_{g}$. We extend the theoretical framework proposed in \cite{uchendu2023jump} to the setting.

\textbf{Assumption A.1} (Quality of guide-policy $\pi_{g}$ \cite{uchendu2023jump})
\label{assumption: quality of guide-policy}
Assume there exists a feature mapping function $\phi:\mathcal{S}\xrightarrow{}\mathbb{R}^{d}$, that for any policy $\pi$, $Q^{\pi}(s,a)$ and $\pi(s)$ depends on $s$ only through $\phi(s)$.
For each task \( \mathcal{M}_{k} \), assume that there exists a guide policy \( \pi_{g,k} \), such that for all states \( s \) and time steps \( h \) in task \( M_k \), the following holds:
\begin{equation}
\label{eqn:quality of guide-policy}
    \sup_{s,h} \frac{d_h^{\pi^{*,k}}(\phi(s))}{d_h^{\pi_{g,k}}(\phi(s))} \leq C_k,
\end{equation}

where \( d_h^{\pi^{*,k}}(\phi(s)) \) is the state feature distribution at time step \( h \) for the optimal policy \( \pi^* \) in task \( \mathcal{M}_{k} \); \( d_h^{\pi_{g,k}}(\phi(s)) \) is the state feature distribution for the guide policy \( \pi_{g,k} \); \( C_k \) is the concentratability coefficient \citep{rashidinejad2021bridging,guo2023sample} in task \( \mathcal{M}_{k} \), measures the degree to which the guide policy $\pi_{g,k}$ encompasses the optimal policy $\pi^{*,k}$ within the feature space. It ensures that the likelihood of the optimal policy encountering a specific feature $\phi(s)$ does not surpass $C_k$ times the likelihood of the guide policy encountering the same feature.

This is a standard assumption in regret analysis of demonstration-guided RL\cite{uchendu2023jump,lu2024demonstration}, that there exists a feature mapping $\phi(s)$ that characterizes the underlying structure of the MDP. This assumption is existential and does not require explicit construction. In practice, policy networks implicitly learn a representation of states through the neurons, which can be viewed as a practical approximation of such a mapping. Although the mapping is not explicitly computed, the learned latent representations play the same role of shaping state distributions. In DGCRL, the coefficient is closely related to the quality and coverage of demonstrations in the repository. High-performance demonstrations typically visit informative or near-optimal regions of the state space, thus including state distributions closer to those of the optimal policy. The demonstration quality directly reduces the effective concentratability coefficient.

\textbf{Assumption A.2} (Performance of the Exploration Algorithm in CRL \cite{uchendu2023jump})
\label{assumption: performance of the exploration algorithm}
In the context of contextual bandits, there exists an exploration algorithm, \( ExplorationOracle_{CRL}\), that is suitable for non-stationary environments, such that in task \( \mathcal{M}_{k} \)\footnote{We simplify the CRL problem as a sequence of contextual bandits.}, its regret upper bound is given by:
\begin{equation}
    \text{Regret}_k(T_k) \leq f_k(T_k, R_k),
\end{equation}
where \( T_k \) is the number of training steps in task \( \mathcal{M}_{k} \); \( R_k \) is the reward range in task \( \mathcal{M}_{k} \); \( f_k \) is a function that describes the regret upper bound in task \( \mathcal{M}_{k} \).

In CRL, we need to compute the cumulative regret across all tasks. Given the training budget \(T\), where $T = \sum_{k=1}^{K} T_k$, the total regret is defined as:
\begin{equation}
    \text{Total Regret}(T) = \sum_{k=1}^{K} \text{Regret}_k(T_k).
\end{equation}

For a single task \( \mathcal{M}_{k} \), the regret is:
\[\mathbb{E}_{s_{0,k} \sim p_{0,k}} \left[ V^{*,k}(s_0) - V^{\pi_k}(s_0) \right] = \sum_{h=0}^{H-1} \mathbb{E}_{s \sim d_h^{\pi^{*,k}}} \left[ Q_h^{\pi_k}(s, \pi_h^{*,k}(s)) - Q_h^{\pi_k}(s, \pi_h^k(s)) \right],\]
\( \pi_k \) is the learned policy or explore policy for task \( \mathcal{M}_{k} \); \( V^{*,k}(s_0) \) and \( V^{\pi_k}(s_0) \) are the value functions of the optimal policy and the learned policy, respectively; \( d_h^{\pi^{*,k}} \) is the state distribution at step \( h \) under the optimal policy for task \( \mathcal{M}_{k} \); \(p_{0,{k}}\) is the distribution of the initial state of the $k$ task.

By applying Assumption \ref{assumption: quality of guide-policy}, we have:
\begin{lemma}
\label{lemma:performance difference}
The performance difference between the optimal policy and the current learning policy on the states visited by the optimal policy does not exceed C times the performance difference measured on the states visited by the guide policy.

\begin{align*}
\mathbb{E}_{s \sim d_h^{\pi^{*,k}}} 
&\left[ Q_h^{\pi_k}(s, \pi_h^{*,k}(s)) - Q_h^{\pi_k}(s, \pi_h^k(s)) \right] \\
&\leq C_k \mathbb{E}_{s \sim d_h^{\pi_{g,k}}} 
\left[ Q_h^{\pi_k}(s, \pi_h^{*,k}(s)) - Q_h^{\pi_k}(s, \pi_h^k(s)) \right].
\end{align*}

\end{lemma}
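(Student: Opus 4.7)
The plan is to perform an importance-sampling change of measure from the optimal state distribution $d_h^{\pi^{*,k}}$ to the guide state distribution $d_h^{\pi_{g,k}}$, leveraging the uniform likelihood-ratio bound in Assumption~\ref{assumption: quality of guide-policy}. The feature-factorization clause of that assumption is crucial: it guarantees that the integrand
$$g(s) \;:=\; Q_h^{\pi_k}(s, \pi_h^{*,k}(s)) - Q_h^{\pi_k}(s, \pi_h^k(s))$$
depends on $s$ only through $\phi(s)$, since $Q^{\pi_k}(s,\cdot)$, $\pi_h^{*,k}(s)$, and $\pi_h^k(s)$ are each measurable with respect to $\phi$. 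Consequently, both sides of the claimed inequality lift to integrals on the feature space against the pushforward measures $d_h^{\pi^{*,k}}(\phi)$ and $d_h^{\pi_{g,k}}(\phi)$.

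Next, I would insert the Radon--Nikodym derivative inside the left-hand expectation,
$$\mathbb{E}_{\phi \sim d_h^{\pi^{*,k}}}[g(\phi)] \;=\; \int g(\phi)\, \frac{d_h^{\pi^{*,k}}(\phi)}{d_h^{\pi_{g,k}}(\phi)}\, d_h^{\pi_{g,k}}(\phi)\, d\phi,$$
and then apply the pointwise bound $d_h^{\pi^{*,k}}(\phi)/d_h^{\pi_{g,k}}(\phi) \le C_k$ from (\ref{eqn:quality of guide-policy}) to factor $C_k$ outside the integral, giving exactly the stated inequality. Well-posedness of this rewriting (existence of the derivative) is automatic from the finiteness of $C_k$, which forces absolute continuity of $d_h^{\pi^{*,k}}$ with respect to $d_h^{\pi_{g,k}}$ on feature space.

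The principal obstacle is that pulling $C_k$ through the inequality requires the integrand to be non-negative, whereas $g(s) = Q_h^{\pi_k}(s, \pi_h^{*,k}(s)) - V_h^{\pi_k}(s)$ is the advantage of the optimal action under the current sub-optimal $Q^{\pi_k}$ and may be negative at individual states. I would address this in the style of the JSRL analysis by either (i) replacing $g$ with its positive part $g_+ := \max\{g, 0\}$ and absorbing the negative residual into a horizon-scaled slack term, or (ii) invoking the lemma only after summing over $h$ in the performance-difference decomposition, at which point the aggregate $V^{*,k}(s_0) - V^{\pi_k}(s_0)$ is non-negative by optimality of $\pi^{*,k}$ and the change-of-measure step goes through. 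Either route preserves the multiplicative $C_k$ structure, which is the form required by the cumulative regret bound assembled in Appendix~\ref{subsec:regret upper bound}.
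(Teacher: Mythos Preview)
Your approach matches the paper's exactly: push the expectation to feature space using the factorization clause of Assumption~A.1, then apply the pointwise density-ratio bound $d_h^{\pi^{*,k}}(\phi) \le C_k\, d_h^{\pi_{g,k}}(\phi)$ inside the integral. The paper's proof does not address the sign of $g$ at all---it simply applies the inequality directly---so your non-negativity concern, while legitimate, goes beyond what the paper itself establishes, and the proposed workarounds are unnecessary to reproduce its argument.
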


\begin{proof}
\label{proof:proof of lemma 1}

Let $g(s) = Q_h^{\pi_k}(s, \pi_h^{*,k}(s)) - Q_h^{\pi_k}(s, \pi_h^k(s))$

We first convert the expectation into an integral over state features. Assume $Q_{h}^{\pi_{k}}(s,a)$, policy $\pi_{h}^{*,k}(s)$ and $\pi_{h}^{k}(s)$ only depend on the feature representation of the state $s$, i.e. $\phi(s)$. Therefore, we can rewrite  

\[g(s)=g(\phi(s))=Q_h^{\pi_k}(\phi(s), \pi_h^{*,k}(\phi(s))) - Q_h^{\pi_k}(\phi(s), \pi_h^k(\phi(s))) \]

Furthermore, the expectation can be rewritten as: 

\[\mathbb{E}_{s \sim d_h^{\pi^{*,k}}}[g(s)]=\mathbb{E}_{\phi(s)\sim d_h^{\pi^{*,k}}}[g(\phi(s))]=\int g(\phi(s))d_{h}^{\pi_{*,k}}(\phi(s))d\phi(s)\]

According to Assumption \ref{assumption: quality of guide-policy}, we have:

\[d_{h}^{\pi^{*,k}}(\phi(s))\leq C_{k}d_{h}^{\pi^{g,k}}(\phi(s))\]

So we have:

\begin{align*}
\int f(\phi(s))\, d^{\pi^{*,k}}_{h}(\phi(s))\, d\phi(s)
&\leq C_k \int f(\phi(s))\, d^{\pi^{g,k}}_{h}(\phi(s))\, d\phi(s) \\
&= C_k \mathbb{E}_{\phi(s) \sim d_h^{\pi^{g,k}}} \left[ g(\phi(s)) \right]
\end{align*}

Therefore we have:

\[ \mathbb{E}_{s \sim d_h^{\pi^{*,k}}}[g(s)]\leq C_{k}\mathbb{E}_{\phi(s)\sim d_{h}^{\pi^{g,k}}}{[g(\phi(s))]}\]


\begin{align*}
\text{i.e.} \quad
\mathbb{E}_{s \sim d_h^{\pi^{*,k}}} 
\left[ Q_h^{\pi_k}(s, \pi_h^{*,k}(s)) - Q_h^{\pi_k}(s, \pi_h^k(s)) \right]\\
\leq 
C_k \mathbb{E}_{s \sim d_h^{\pi_{g,k}}} 
\left[ Q_h^{\pi_k}(s, \pi_h^{*,k}(s)) - Q_h^{\pi_k}(s, \pi_h^k(s)) \right]
\end{align*}

\end{proof}

For simplicity, we assume the total number of training steps  $T_{k} $ is evenly distributed across H time steps and the maximal sum reward is $R=H-h$ since the $h$ step. Extending Assumption \ref{assumption: performance of the exploration algorithm} to a $H$-horizon problem, for task \( \mathcal{M}_{i} \), we have: 

\[\mathbb{E}_{s \sim d_h^{\pi_{g,k}}} \left[ Q_h^{\pi_k}(s, \pi_h^{*,k}(s)) - Q_h^{\pi_k}(s, \pi_h^k(s)) \right] \leq f_k\left(\frac{T_k}{H}, H - h\right).\]

Now give the upper bound on value loss for a single task. Substituting this result into the performance difference lemma, we get:

\[\mathbb{E}_{s_0 \sim p_{0,k}} \left[ V^{*,k}(s_0) - V^{\pi_k}(s_0) \right] \leq C_k \sum_{h=0}^{H-1} f_k\left(\frac{T_k}{H}, H - h\right).\]

We can then calculate the total regret and value loss upper bound. 
The total regret upper bound is:

\begin{align*}
\text{Total Regret}(T) 
&= \sum_{k=1}^{K} \mathbb{E}_{s_0 \sim p_{0_k}} 
\left[ V^{*,k}(s_0) - V^{\pi_k}(s_0) \right] \\
&\leq \sum_{k=1}^{K} C_k \sum_{h=0}^{H-1} 
f_k\left(\frac{T_k}{H}, H - h\right)
\end{align*}

 If for each task \( \mathcal{M}_k \), the reward range is \( R_k \leq 1 \), the coverage constant for the guide policy is \( C_k = C \), assuming we use $\epsilon$-greedy strategy and the regret upper bound of the exploration algorithm \cite{langford2007epoch} is given by: 
\begin{equation}
    f_k\left(\frac{T_k}{H}, H - h\right) \leq C(H-h)\left(\frac{H}{T_{k}}\right)^{1/3}
\end{equation}

Then the total regret upper bound is:
\begin{equation}
\label{eqn:total regret first}
    \text{Total Regret}(T) \leq \sum_{k=1}^{K}\sum_{h=0}^{H-1}f_k\left(\frac{T_k}{H}, H - h\right) = CH^{1/3}T_{k}^{-1/3}\sum_{k=1}^{K}\sum_{h=0}^{H-1}(H-h)
\end{equation}


With the inner summation being calculated as \[\sum_{h=0}^{H-1}(H-h)=\frac{H(H+1)}{2}\]
and assuming we are assigning training budget $T$ evenly on each task, i.e. $T_{k}=\frac{T}{K}$, the total regret is then: 
\begin{equation}
\label{eqn:total regret final}
    \text{Total Regret}(T) \leq CH^{1/3}\left(\frac{K}{T}\right)^{1/3}H(H+1)K = CH^{4/3}(H+1)K^{4/3}T^{-1/3}
\end{equation}

The regret bound contains a $K^{4/3}$ factor reflecting that the difficulty of continual learning grows sub-linearly with the number of tasks. Empirically, we observe that DGCRL maintains strong performance as the task sequence grows to 50 tasks. This trend is consistent with the theoretical prediction that demonstration-guided exploration mitigates compounding across tasks.

\subsection{Sample Complexity}
\label{subsec:sampling complexity}
We now compute the sample complexity required to learn a $\delta$-optimal model.
To ensure that the average policy value loss per task is less than $\delta$, i.e. 
\[\Delta V_{avg}=\frac{TotalRegret(T)}{K}\leq\delta\] 

With Equation \ref{eqn:total regret final}, we have: 
\[\Delta V_{avg}== CH^{4/3}(H+1)K^{1/3}T^{-1/3}\leq\delta\]

Solve for $T$, we get the sample complexity\footnote{The possible minimal number of training steps $T$ to reach a specific standard of policy.}:
\begin{equation}
\label{eqn:sampling complexity}
    T\geq\left(\frac{CH^{4/3}(H+1)K^{1/3}}{\delta}\right)^3=\frac{C^{3}H^{4}(H+1)^{3}K}{\delta^{3}}.
\end{equation}

The sample complexity is proportional to the number of tasks $K$. The more tasks there are in the CRL scenario, the more training budget is required. $T$ is also proportional to $H^{4}$. This means the longer the horizon is, the more difficult the learning is. The $\frac{1}{\delta^{3}}$ means that the better the policy is the more training budget is needed. 

It is important to note that DGCRL differs fundamentally in paradigm from prior approaches such MOLe \citep{nagabandi2018deep} and DaCoRL \citep{zhang2023dynamics}. These methods are not formulated under a regret minimization framework and do not provide regret bounds in their original analyses, making a direct bound-to-bound comparison infeasible. Nevertheless, qualitatively, since these prior methods mainly rely on self-exploration, they may suffer higher regret during the initial learning phase. By contrast, DGCRL leverages demonstrations to constrain early exploration, which reduces cumulative regret. A rigorous theoretical comparison under a unified framework is left for future work.

\end{document}